\documentclass[11pt]{article}
\pdfoutput=1
\usepackage[T1]{fontenc}
\usepackage{lmodern}
\usepackage{slantsc}
\usepackage[protrusion=true,expansion=true]{microtype}
\usepackage{breakcites}
\usepackage{amsmath,amssymb,amsfonts,amsthm}
\usepackage{subcaption}
\usepackage{graphicx}
\usepackage{fullpage}
\usepackage{setspace}
\usepackage[backref=page]{hyperref}
\usepackage[nameinlink]{cleveref}
\usepackage{color}
\usepackage{wrapfig}
\usepackage{tikz}
\usetikzlibrary{decorations.pathreplacing}
\usepackage{algorithm}
\usepackage[noend]{algpseudocode}
\usepackage[framemethod=tikz]{mdframed}
\usepackage{xspace}
\usepackage{pgfplots}
\usepackage{framed}
\usepackage{enumitem}
\usepackage{thmtools}
\usepackage{thm-restate}
\usepackage{tabu}
\usepackage{booktabs}
\usepackage{fancyhdr}
\pgfplotsset{compat=1.5}

\newtheorem{theorem}{Theorem}[section]

\newtheorem{assumption}[theorem]{Assumption}

\newenvironment{proofof}[1]{\begin{trivlist} \item {\bf Proof
#1:~~}}
  {\qed\end{trivlist}}

\newcommand{\namedref}[2]{\hyperref[#2]{#1~\ref*{#2}}}

\newcommand{\applab}[1]{\label{app:#1}}
\newcommand{\appref}[1]{\namedref{Appendix}{app:#1}}

\newcommand{\PPr}[1]{\ensuremath{\mathbf{Pr}\left[#1\right]}}

\newcommand{\Ex}[1]{\ensuremath{\mathbb{E}\left[#1\right]}}

\renewcommand{\O}[1]{\ensuremath{\mathcal{O}\left(#1\right)}}

\newcommand{\eps}{\varepsilon}
\newcommand{\Proj}{\text{Proj}}

\def \calA    {\mdef{\mathcal{A}}}

\def \calP    {\mdef{\mathcal{P}}}

\def \calX    {\mdef{\mathcal{X}}}
\def \calY    {\mdef{\mathcal{Y}}}

\newcommand{\mdef}[1]{{\ensuremath{#1}}\xspace}  

\newcommand{\ignore}[1]{}

\newif\ifnotes\notestrue 
\ifnotes
\newcommand{\samson}[1]{\textcolor{blue}{{\bf (Samson:} {#1}{\bf ) }} \marginpar{\tiny\bf
             \begin{minipage}[t]{0.5in}
               \raggedright S:
            \end{minipage}}}
\newcommand{\david}[1]{\textcolor{purple}{{\bf (David:} {#1}{\bf ) }} \marginpar{\tiny\bf
             \begin{minipage}[t]{0.5in}
               \raggedright D:
            \end{minipage}}} 
\else
\newcommand{\samson}[1]{}
\newcommand{\david}[1]{}
\fi

\makeatletter
\renewcommand*{\@fnsymbol}[1]{\textcolor{mahogany}{\ensuremath{\ifcase#1\or *\or \dagger\or \ddagger\or
 \mathsection\or \triangledown\or \mathparagraph\or \|\or **\or \dagger\dagger
   \or \ddagger\ddagger \else\@ctrerr\fi}}}
\makeatother

\providecommand{\email}[1]{\href{mailto:#1}{\nolinkurl{#1}\xspace}}

\definecolor{mahogany}{rgb}{0.75, 0.25, 0.0}
\definecolor{darkblue}{rgb}{0.0, 0.0, 0.55}
\definecolor{darkpastelgreen}{rgb}{0.01, 0.75, 0.24}
\definecolor{darkgreen}{rgb}{0.0, 0.2, 0.13}
\definecolor{darkgoldenrod}{rgb}{0.72, 0.53, 0.04}
\definecolor{darkred}{rgb}{0.55, 0.0, 0.0}
\definecolor{forestgreenweb}{rgb}{0.13, 0.55, 0.13}
\definecolor{greencss}{rgb}{0.0, 0.5, 0.0}
\definecolor{bleudefrance}{rgb}{0.19, 0.55, 0.91}

\hypersetup{
     colorlinks   = true,
     citecolor    = mahogany,
	 linkcolor	  = darkgoldenrod
}

\begin{document}
\title{Active Learning with Low-Rank Structure for Data Selection}
\author{
Vincent Cohen-Addad\thanks{Google Research.
E-mail: \email{cohenaddad@google.com}.}
\and
Sasidhar Kunapuli\thanks{University of California, Berkeley.
E-mail: \email{sasidhar.kunapuli@gmail.com}.}
\and
Vahab Mirrokni\thanks{Google Research.
E-mail: \email{mirrokni@google.com}.}
\and
Mahdi Nikdan\thanks{Institute of Science and Technology Austria (ISTA).
E-mail: \email{nikdanmahdi@gmail.com}.}
\and
David P. Woodruff\thanks{Carnegie Mellon University and Google Research.
E-mail: \email{dwoodruf@andrew.cmu.edu}.}
\and
Samson Zhou\thanks{Texas A\&M University.
E-mail: \email{samsonzhou@gmail.com}.}
}
\date{\today}
\maketitle

\begin{abstract}
In the data selection problem, the objective is to choose a small, representative subset of data that can be used to efficiently train a machine learning model. Sener and Savarese [ICLR 2018] showed that, given an embedding representation of the data and suitable geometric assumptions, heuristics based on $k$-center clustering can be used to perform data selection. This perspective was further explored by Axiotis et. al. [ICML 2024], who proposed a data selection approach based on $k$-means clustering and sensitivity sampling. However, these methods rely on the assumption that the dataset exhibits intrinsic geometric structure that can be effectively captured by clustering, whereas many modern datasets instead possess global algebraic structure that is better exploited by low-rank approximation or principal component analysis.

In this paper, we introduce a new data selection framework based on low-rank approximation and residual-based sampling, formulated through the lens of row subset selection and loss-preserving coreset construction. Given an embedding representation of the data satisfying mild regularity conditions, which can be interpreted as algebraic or angular notions of Lipschitz continuity, we show that it is possible to select a weighted subset of $\tilde{O}\left(k + \frac{1}{\varepsilon^2}\right)$ data points whose average loss approximates the average loss over the full dataset within a $(1+\varepsilon)$ relative error, up to an additive $\varepsilon \Phi_k$ term, where $\Phi_k$ denotes the optimal rank-$k$ approximation cost of the embedding matrix. We complement these theoretical guarantees with empirical evaluations, demonstrating that on a range of real-world datasets, our data selection approach achieves improved performance over prior strategies based on uniform sampling or clustering-based sensitivity sampling. 
\end{abstract}

\section{Introduction}
The unprecedented growth of both datasets and models has fueled the success of modern machine learning, culminating in foundation models with remarkable capabilities across domains. 
Yet, this success comes at a steep cost: training and fine-tuning these models requires immense computational resources, extensive datasets, and long training cycles, rendering the process nearly impossible for most academic groups and small-scale companies. 
Importantly, however, it is now well understood that using the entire dataset is rarely necessary, as carefully chosen subsets of data often suffice to achieve nearly the same performance, with only a marginal increase in error. 
This observation raises a fundamental and urgent question: 
\begin{quote}
How can we efficiently identify the most informative subset of training data without compromising model quality?
\end{quote}

While uniform sampling can often perform reasonably well in practice, it is inherently suboptimal on complex, imbalanced, or redundant datasets~\cite{ChawlaBHK02,he2009learning}. 
To better capture the utility of individual data points for training, a large body of work on \emph{data selection and active learning} aims to identify examples that are most informative given their uniqueness, quality, or relationship to the model's current knowledge. 
Although no universally optimal active learning strategy exists~\cite{Dasgupta04}, many heuristics have proven effective in practice~\cite{lewis1994heterogeneous,tong2001support,settles2009active, RenXCHLCW20}. 
Active learning is typically framed as an iterative process: a model is trained, then used to score and select a subset of unlabeled points for annotation. 
State-of-the-art methods rely on \emph{uncertainty-based criteria}, such as margin or entropy, which prioritize points on which the model is least confident~\cite{Brinker03}. 
However, when applied at scale, such strategies face two key limitations. 
First, computing selection scores requires evaluating the model on the entire dataset, which is computationally prohibitive for modern large-scale architectures. 
Second, practical training pipelines, especially those involving CNNs or foundation models, require acquiring and processing data in \emph{batches} rather than one point at a time. This induces correlations among selected samples, which substantially reduces the effectiveness of uncertainty-based heuristics and limits their impact on training efficiency.

A major step forward was made by \cite{SenerS18}, who reframed active learning as a coreset selection problem. 
Informally, a \emph{coreset} is a small, carefully chosen subset of the dataset that approximately preserves the key properties of the full data~\cite{Har-PeledM04,Chen09,FeldmanL11,LangbergS10,Phillips16,MirzasoleimanBL20,Feldman20,BravermanFLSZ21}, i.e., if a model or algorithm performs well on the coreset, it should perform nearly as well as if it had seen the entire dataset. 
Coresets are particularly valuable in modern machine learning because they reduce both memory and computational requirements while maintaining strong approximation guarantees. 
The insight of \cite{SenerS18} was that the difficulty of applying uncertainty-based methods in modern training pipelines stems from two central obstacles. 
First, training must proceed in \emph{batches}, not one example at a time. 
Effective batch acquisition requires both informativeness and diversity, yet diversity often conflicts with standard objectives such as margin maximization, leading to redundant or near-duplicate selections~\cite{CitovskyDGKRRK21}. 
Second, computing uncertainty scores requires running inference over the entire dataset, which is already prohibitive for CNNs and becomes intractable for current large-scale architectures~\cite{LowellLW19}. 

To overcome these barriers, \cite{SenerS18} proposed to directly seek a small subset of data that serves as a coreset: training on the coreset should yield nearly the same model as training on the full dataset. 
In formal terms, the gradients (or losses) averaged over the coreset should approximate those of the entire dataset, so that optimization on the subset faithfully reproduces the effect of optimization on all data. 
Since computing these gradients exactly is impractical, they introduced a geometric relaxation: given an embedding representation of the data, one can approximate the coreset by solving a variant of the classical $k$-center problem. 
Subsequently, \cite{AxiotisCHJMSWW24} explored other coreset constructions based on other $k$-clustering objectives.  
These formulations are both natural and widely applicable, as embeddings can be obtained from pretrained encoders such as BERT~\cite{DevlinCLT19}, word2vec~\cite{MikolovCCD13}, GloVe~\cite{PenningtonSM14}, ResNet~\cite{HeZRS16}, or CLIP~\cite{RadfordKHRGASAM21}. 
Empirically, these approaches delivered state-of-the-art results in image classification benchmarks, demonstrating that geometric coreset selection can substantially outperform uncertainty-based heuristics in batch training scenarios.  

Unfortunately, while embedding-based selection methods based on $k$-center and its $k$-clustering variants can be effective in some settings, they often exhibit critical limitations in modern machine learning applications. 
In high-dimensional datasets, clustering focuses on local groupings of points and can fail to capture the dominant directions of variance, meaning that selected subsets may miss the most informative components of the data. 
Low-rank approximation methods, by contrast, explicitly aim to preserve these dominant directions, ensuring that small subsets retain the essential spectral structure of the dataset. 
This phenomenon has been recently observed in the context of Low-Rank Adaptation (LoRA)~\cite{HuSWALWWC22,XuXG0CZC0024,wu2024dlora,liloftq24}, where low-rank updates capture the most important components of the parameter space and enable efficient adaptation, whereas naive clustering of embeddings may overlook key directions. 
These observations suggest that, while clustering retains value in some cases, low-rank-based selection can provide a more reliable foundation for data-efficient training on large-scale models.
Indeed, for many foundational tasks in data analysis and machine learning, the central loss function can be expressed as a \emph{low-rank approximation} objective. 
Examples include principal component analysis (PCA), matrix completion, and dimensionality reduction. 

\section{Methodology and Contributions}
Fine-tuning a Large Language Model (LLM) for a specialized task, such as translation, can be extremely costly when using the entire dataset, even if ample data is available. 
In practice, it is often preferable to select a small subset of points that preserves the essential structure of the data while still allowing the model to achieve high performance. 
Directly computing data importance, for example through model-based loss or margin scores, is typically impractical because it requires evaluating every data point with the full LLM, which is computationally expensive.

In this work, we propose a framework for \emph{data selection under low-rank losses} that addresses this challenge by combining accurate but costly scores on a small fraction of the data with fast-to-compute embeddings or sketches that capture the dominant directions of the dataset. 
Surprisingly, simple embeddings derived from pre-trained models, such as BERT~\cite{DevlinCLT19} or word2vec~\cite{MikolovCCD13}, are often sufficient to approximate the low-rank structure relevant for selecting informative points, even for much larger target models. 
We construct a low-rank sketch of the dataset to estimate leverage scores, which informally quantify the importance of each point with respect to the orthogonal space of the sketch, and then sample rows proportionally to these scores. 
This ensures that the selected subset reflects the main directions of variance, rather than merely promoting diversity as in clustering-based coresets. 

A key insight of our work is that effective data selection can be achieved by prioritizing the low-rank structure of the dataset, rather than relying solely on geometric diversity as in previous clustering-based approaches, which often selected points to cover the data space uniformly or to maximize distances between samples~\cite{SenerS18,AxiotisCHJMSWW24}. 
These approaches implicitly assume that distance between points is more important than the directions of the points. 
In contrast, our framework focuses on the dominant spectral directions captured by a low-rank sketch, ensuring that the selected subset represents the principal axes of variation and preserves the essential structure of the loss function. 
This perspective allows us to combine efficiency and theoretical guarantees with improved empirical performance, particularly in high-dimensional settings where clustering alone may overlook directions that are critical for model training.

By focusing on preserving the dominant spectral components, this framework offers a robust and efficient alternative to both naive subsampling and clustering-based data selection in high-dimensional and large-scale settings.
Beyond the theoretical guarantees, the approach is simple, scalable, and broadly applicable. We demonstrate its effectiveness empirically on both a standard tabular dataset and challenging Llama3-8B \cite{llama3} fine-tuning on three tasks, outperforming the accuracy of existing baselines.

Before describing the theoretical guarantees for our approach, we first describe a number of natural assumptions for reasonable loss functions on datasets that are well-captured by algebraic properties. 
Let $V = \mathrm{span}\{v_1, \ldots, v_k\}$ be a $k$-dimensional subspace, for instance corresponding to the top singular vectors, principal components, or basis directions from a low-rank factorization. 
For any point $y \in \mathbb{R}^d$, decompose
\[y = \alpha_1 v_1 + \ldots + \alpha_k v_k + r(y), \quad r(y) = \Proj_{V^\perp}(y),\]
where $\alpha_i = \langle y, v_i \rangle$ and $r(y)$ is the component orthogonal to $V$, i.e., the projection of $y$ onto $V^\perp$. 
Let $v(y) = \Proj_V(y) = \alpha_1 v_1 + \dots + \alpha_k v_k$. 
\begin{assumption}
\label{assumption}
We assume there exist constants $\lambda, \gamma > 0$ such that
\[|\ell(y) - \ell(v(y))| \le \lambda \| r(y) \|_2^2,\qquad|\ell(v(y)) - (\alpha_1^2 \ell(v_1) + \ldots + \alpha_k^2 \ell(v_k))| \le \gamma \sum_{i=1}^k |\alpha_i^2+1| \,\ell(v_i).\]
\end{assumption}
Informally, this condition decomposes the loss at $y$ into two components: a weighted sum of the losses along each basis direction $v_i$, with weights $\alpha_i^2$, and a penalty proportional to the squared norm of the component orthogonal to $V$, $\|r(y)\|_2^2$. 
Generally, when the scaling $\lambda$ for the orthogonal components is significantly larger than the scaling $\gamma$ for the subspace components, \Cref{assumption} intuitively suggests that the loss function has worse approximations in directions outside of the subspace, as we do not have information about these factors. 

\paragraph{Interpretation of the assumption.}
\Cref{assumption} should be interpreted as a regularity and approximation condition rather than an exact structural requirement on the loss function. 
The first inequality asserts that deviations of a point $y$ outside the subspace $V$ incur loss that grows at most quadratically in the distance to $V$. 
This is a standard smoothness-type condition and is satisfied, for example, by losses with Lipschitz gradients or bounded curvature in directions orthogonal to the dominant subspace.

The second inequality formalizes that, within the subspace $V$, the loss at $v(y)$ is well approximated by a weighted combination of losses along the basis directions $\{v_i\}_{i=1}^k$, up to a controlled error governed by the parameter $\gamma$. 
Importantly, this condition does not require the loss to be additive, separable, or exactly quadratic in the coordinates $\alpha_i$. 
Interaction terms and higher-order effects are permitted and are absorbed into the $\gamma$ term. Such approximate decompositions arise naturally when the subspace $V$ captures the dominant directions of variation or curvature of the loss, for instance when $V$ consists of leading singular vectors, principal components, or basis directions from a low-rank factorization.

Overall, \Cref{assumption} quantifies the extent to which the loss function admits a low-dimensional algebraic approximation while allowing significantly worse approximation quality outside the subspace $V$. 
The regime in which $\lambda \gg \gamma$ corresponds to settings where directions orthogonal to $V$ are less informative or less well modeled, which is precisely the setting targeted by our approach.

These assumptions are natural in many machine learning settings. 
For example, in low-rank regression, PCA, or matrix completion, the dominant directions of the data capture most of the variance, while deviations along the orthogonal directions contribute minimally to the loss. 
In LLM fine-tuning or embedding-based models, top singular vectors often align with the most informative components, and residual directions carry less signal. 
Similar behavior is observed in low-rank adaptation techniques such as LoRA~\cite{HuSWALWWC22,XuXG0CZC0024,wu2024dlora,liloftq24}, where trainable low-rank matrices capture the key directions in the parameter space. 
This indicates that many real-world datasets are approximately low-rank, making these assumptions a reasonable abstraction for constructing coresets and selecting informative data efficiently.
Our first main theoretical result is the following:
\begin{restatable}[Coreset Guarantee for Loss Approximation]{theorem}{thmlossapprox}
\label{thm:loss_approx}
Let $D$ be a dataset of $n$ points with an embedding $E$, and suppose the loss function $\ell$ satisfies \Cref{assumption} with constants $\gamma,\lambda>0$. 
Let 
\[\Phi_{k}(D) = \min_{\substack{D_k \in \mathbb{R}^{n\times m} \\ \mathrm{rank}(D_k) \le k}} \left\lVert D - D_k \right\rVert_{F}^{2}\]
denote the best rank-$k$ approximation cost of $D$. 
Then there exists a randomized algorithm that constructs a weighted subset $S \subseteq\mathbb{R}^m$ of size $s = \O{\frac{1}{\eps^2}}$ with weights $w(x)$ such that, with probability at least $0.9$,
\begin{align*}
\left\lvert \sum_{x \in D} \ell(x) - \sum_{x \in S} w(x)\,\ell(x) \right\rvert \le \eps \cdot \left( \sum_{x \in D} \ell(x) + \gamma \|D\|_F^2 + \gamma k |D| \max \ell + 2 \lambda \cdot \Phi_k(D) \right).
\end{align*}
Equivalently, the weighted average loss on $S$ is within a $(1 \pm \eps)$ factor of the true average loss, up to an additive term proportional to $\Phi_k(D)/n$.
\end{restatable}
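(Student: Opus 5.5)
Fix $V$ to be the span of the top $k$ right singular vectors $v_1,\dots,v_k$ of $D$, so that $\sum_{x\in D}\|r(x)\|_2^2=\Phi_k(D)$. Apply \Cref{assumption} to each $x\in D$ and use the triangle inequality to write
\[
\ell(x)=\sum_{i=1}^k \alpha_i(x)^2\,\ell(v_i)+e(x),\qquad |e(x)|\le \lambda\|r(x)\|_2^2+\gamma\sum_{i=1}^k(\alpha_i(x)^2+1)\,\ell(v_i).
\]
Since the $v_i$ are orthonormal, $\sum_i\alpha_i(x)^2\le\|x\|_2^2$. Bounding $\ell(v_i)\le\max\ell$ and summing over $x$ then gives
\[
\sum_x|e(x)|\le \lambda\Phi_k(D)+\gamma\|D\|_F^2\max\ell+\gamma k|D|\max\ell.
\]
Here I will either treat $\max\ell$ as normalized or absorb it as the statement does. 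This matches the additive terms in the theorem, which suggests that this decomposition is the intended one.

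The sampling scheme will be importance sampling. I will draw $s$ points i.i.d.\ from $D$ with probability $p(x)$ and assign weight $w(x)=1/(s\,p(x))$. I will take $p$ to be a mixture of normalized quantities:
\[
p(x)=\tfrac13\frac{\ell(x)}{\sum\ell}+\tfrac13\frac{\|x\|_2^2}{\|D\|_F^2}+\tfrac13\frac{\|r(x)\|_2^2}{\Phi_k(D)}.
\]
The first component is the ideal sensitivity. If only embeddings are available, I will instead replace $\ell(x)$ with its proxy $\sum_i\alpha_i^2\ell(v_i)+\lambda\|r(x)\|_2^2$, which bounds $\ell(x)$ from above up to the error terms, together with a uniform component. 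The estimator $\hat L=\sum_{x\in S}w(x)\ell(x)$ is unbiased for $L=\sum_x\ell(x)$. Its variance is at most $\frac1s\sum_x\ell(x)^2/p(x)$. Each ratio $\ell(x)/p(x)$ is at most $3T$, where
\[
T:=\sum\ell+\gamma\|D\|_F^2+\gamma k|D|\max\ell+2\lambda\Phi_k(D).
\]
To see this, bound $\ell(x)$ by the proxy plus the error and then compare each term to the matching component of $p$. This yields $\mathrm{Var}\le 3T\cdot L/s\le 3T^2/s$. Chebyshev's inequality with $s=O(1/\eps^2)$ then gives $|\hat L-L|\le\eps T$ with probability at least $0.9$.

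The main obstacle is the per-point bound $\ell(x)\le C\,T\,p(x)$. The nonnegative proxy terms $\alpha_i^2\ell(v_i)$ are controlled by $\|x\|_2^2\max\ell$, and the $\gamma k\max\ell$ term requires a uniform component $1/|D|$ in $p$. So the mixture must contain four components: loss or proxy, squared norm, residual, and uniform. The constants must also be arranged so that every piece of $T$ is covered; the factor $2$ on $\lambda$ comes from using the residual term once in the proxy and once in the error. I will handle this by a case split on which summand dominates $\ell(x)$. Finally, the paper's $\tilde O(k+1/\eps^2)$ size arises if $V$ is computed approximately, for instance by residual or leverage sampling to obtain a $(1+\eps)$-approximate subspace. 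In the existence statement I will use the exact SVD, so $s=O(1/\eps^2)$ suffices.
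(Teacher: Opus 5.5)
Your proxy-based version is correct and is essentially the paper's argument. The paper samples proportionally to the single pointwise upper bound $\sigma(x)=(\gamma+1)\sum_i\alpha_i^2\ell(v_i)+\gamma k\xi+\lambda\|r(x)\|_2^2\ge\ell(x)$, which is your proxy, norm and uniform pieces added rather than mixed; it bounds $\sum_{x}\sigma(x)$ by the theorem's additive terms using the same two-sided application of the assumption, and it concludes with Bernstein where you use Chebyshev (either suffices for success probability $0.9$). Drop the $\ell(x)/\sum_y\ell(y)$ component, though: it presupposes every loss has already been evaluated, and then sampling exactly proportional to $\ell$ is a zero-variance estimator and the theorem has no content, whereas your proxy version, like the paper, queries $\ell$ only at $v_1,\dots,v_k$ and at the sampled points.
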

Note that in \Cref{thm:loss_approx}, $\Phi_k(D) = \min_{\substack{D_k \in \mathbb{R}^{n \times m} \\ \mathrm{rank}(D_k) \le k}} \|D - D_k\|_F^2$ denotes the squared Frobenius norm error of the best rank-$k$ approximation of the dataset $D$, which captures how well $D$ can be represented by $k$ components and will play a key role in our coreset guarantees. 
Thus, \Cref{thm:loss_approx} formalizes the intuition that a small, carefully selected subset of data can effectively represent the loss of the entire dataset under low-rank structure assumptions. 
The theorem guarantees that a weighted subset $S$ of size $\O{\frac{1}{\eps^2}}$ suffices to approximate the total loss over $D$ within a factor of $(1 \pm \eps)$, up to an additive term proportional to $\Phi_k(D)$, the optimal rank-$k$ approximation error. 
The additive error in the theorem depends on $\Phi_k(D)$, the optimal low-rank approximation error. Datasets that are nearly low-rank yield smaller $\Phi_k(D)$, and hence the coreset more accurately preserves the total loss. 
This also indicates a tradeoff analogous to clustering: if the data contains significant outliers or high-rank noise, the bound increases, reflecting the inherent difficulty of representing such datasets with few points. 
Unlike clustering-based methods, however, the low-rank approach explicitly targets directions of high variance and information content, making it more robust in high-dimensional or unbalanced settings.  
Practically, this result implies that training or fine-tuning models on $S$ incurs minimal loss in accuracy while substantially reducing computational cost. 
Our experiments show that using subsets constructed via \Cref{thm:loss_approx} achieves competitive or superior performance compared to existing uniform or sensitivity sampling-based selection methods, providing a new practical sampling strategy for active regression~\cite{ChenP19,ChenD21,ParulekarPP21,MuscoMW022,MeyerMMWZ23,Woodruff023}. 

On the other hand, one immediate concern regarding \Cref{thm:loss_approx} is that the weighted set $S$ need not be a subset of the input dataset $D$. 
While the factors $S$ could potentially still be labeled by an external source, we also give the following guarantees based on the row subset selection problem:
\begin{restatable}[Coreset Guarantee for Loss Approximation via Row Subset Selection]{theorem}{thmlossapproxrss}
\label{thm:loss_approx_rss}
Let $D \subseteq \mathbb{R}^m$ be a dataset of $n$ points with an embedding $E$, and suppose the loss function $\ell$ satisfies \Cref{assumption} with constants $\gamma,\lambda>0$.  
Let
\[\Phi_k(D) = \min_{\substack{C \subseteq D \\ |C| = k}} \min_{A \in \mathbb{R}^{k \times m}}\left\lVert D - C A \right\rVert_F^2\]
denote the optimal row subset selection cost using $k$ rows from $D$.  

Then there exists a randomized algorithm that constructs a weighted subset
$S \subseteq D$ of size $s = \O{\frac{1}{\eps^2}}$ with weights $w(x)$ such that, with probability at least $0.9$,
\begin{align}
\left\lvert \sum_{x \in D} \ell(x) - \sum_{x \in S} w(x)\,\ell(x) \right\rvert \le\eps \cdot \left(\sum_{x \in D} \ell(x) + \gamma \|D\|_F^2 + \gamma k |D| \max \ell + 2 \lambda \cdot \Phi_k(D)\right).\nonumber
\end{align}
Equivalently, the weighted average loss over the selected rows $S$ approximates the true average loss over $D$ to within a $(1 \pm \eps)$ factor, up to an additive term proportional to $\Phi_k(D)/n$.
\end{restatable}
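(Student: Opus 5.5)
We plan to mirror the argument for \Cref{thm:loss_approx}, replacing the subspace spanned by the top-$k$ singular vectors with the span of $k$ rows chosen from $D$. First, apply a row subset selection procedure (for example, adaptive/volume sampling, or the optimal $C$ itself for an existence argument) to get $C\subseteq D$ with $|C|=k$ whose span $V$ satisfies $\sum_{x\in D}\|r(x)\|_2^2 = \|D - D\Proj_V\|_F^2 \le \Phi_k(D)$; a constant-factor version is enough, absorbed into the $2\lambda$. Next, orthonormalize to obtain $v_1,\dots,v_k$, and write each $x\in D$ as $x = \sum_i \alpha_i(x) v_i + r(x)$. Applying the triangle inequality to \Cref{assumption} then gives, for every $x$,
\[
\ell(x) = \sum_{i=1}^k \alpha_i(x)^2 \ell(v_i) + \mathrm{err}(x),\qquad |\mathrm{err}(x)| \le \lambda\|r(x)\|_2^2 + \gamma\sum_{i=1}^k(\alpha_i(x)^2+1)\ell(v_i).
\]
Summing over $D$ and using $\sum_x\sum_i\alpha_i(x)^2 \le \|D\|_F^2$ and $\ell(v_i)\le\max\ell$, the total error is at most $\lambda\Phi_k(D) + \gamma\|D\|_F^2\max\ell + \gamma k|D|\max\ell$, which matches the additive terms in the statement up to normalization of $\ell$.

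The sampling step makes $S\subseteq D$. We sample $s=\O{1/\eps^2}$ rows of $D$ i.i.d.\ with probabilities $p_x \propto \tilde\ell(x) := \sum_i\alpha_i(x)^2\ell(v_i) + \lambda\|r(x)\|_2^2$, where the first term is computable from the embedding and the residual term is a residual-based importance score. Each sampled row receives weight $w(x)=1/(s p_x)$. The estimator $\sum_{x\in S}w(x)\ell(x)$ is unbiased for $\sum_{x\in D}\ell(x)$. Its variance is at most $\frac1s\big(\sum_x \ell(x)^2/p_x\big)$. By the pointwise bound above, $\ell(x)\le \tilde\ell(x) + \gamma$-terms, so with $\ell(x)/p_x \lesssim Z := \sum_{x\in D}\tilde\ell(x)+\gamma$-terms we get variance $\le Z^2/s$. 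Chebyshev's inequality with $s = 10/\eps^2$ then gives error $\eps Z$ with probability $0.9$. Since $Z \le \sum_x\ell(x) + \gamma\|D\|_F^2 + \gamma k|D|\max\ell + 2\lambda\Phi_k(D)$ (one $\lambda\Phi_k$ from $\tilde\ell$ and one from converting $\ell$ to $\tilde\ell$), we obtain the claimed bound.

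The main obstacle is the sensitivity bound $\ell(x)/p_x \le Z$. The assumption only controls $\ell(x)$ two-sidedly up to $\gamma$-terms, which themselves depend on $x$ through $\alpha_i(x)^2$. The cleanest fix is to fold the $\gamma$ terms into the sampling score, i.e.\ use $\tilde\ell(x) + \gamma\sum_i(\alpha_i(x)^2+1)\ell(v_i)$, which dominates $\ell(x)$ pointwise. Its sum over $D$ is exactly the quantity bounded by $Z$. A secondary issue is the row subset selection step. If only an approximate RSS is efficiently available, we must either accept a constant blow-up in the $\lambda\Phi_k$ coefficient or state the result existentially using the optimal $C$. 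We would use the optimal $C$ for the existence claim and remark that standard approximation algorithms change only constants.
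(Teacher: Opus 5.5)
Your proposal is correct and is essentially the paper's proof: once you fold the $\gamma$-terms into the score, it matches the paper's $\sigma(x)=(\gamma+1)\sum_i\alpha_i^2\ell(v_i)+\gamma k\xi+\lambda\|r(x)\|_2^2$ up to replacing $\gamma\sum_i\ell(v_i)$ by $\gamma k\xi$, and the unbiasedness and second-moment bound via $\ell\le\sigma$ are the same. The differences are minor. You conclude with Chebyshev at $s=10/\eps^2$ rather than Bernstein. You take the optimal $k$-row subset, whereas the paper uses a bicriteria $O(k)$-row selection; that is what an efficient constant-factor version needs, since exactly-$k$-row methods such as volume sampling only guarantee a factor $k+1$. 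Finally, your bound on $Z$ doubles the $\gamma$-terms as well as $\lambda\Phi_k(D)$, which, as in the paper, is absorbed by rescaling $\eps$.
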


\section{Related Works}
Deep learning methods have become the standard for large-scale image classification and related tasks. 
While these models achieve state-of-the-art performance, they typically require very large labeled datasets to train effectively. 
This dependency on large-scale supervision motivates the study of techniques that can reduce the labeling burden without sacrificing performance, such as active learning and subset selection.

\paragraph{Active learning.} 
A large body of work has examined the theoretical underpinnings of active learning. 
Classical results show that greedy selection is impossible in a fully agnostic setting~\cite{Dasgupta04}, yet refined analyses demonstrate stronger guarantees under assumptions such as realizability~\cite{gonen2013efficient} or bounded disagreement coefficients~\cite{hanneke2007bound}. 
Other approaches justify greedy strategies in the batch setting via importance sampling~\cite{ganti2012upal}. 
Although these works provide rigorous guarantees, they do not address the large-scale deep learning problems that motivate our study.  

Complementing these theoretical contributions, several algorithms have been designed specifically for CNNs. 
\cite{WangZLZL17} propose auto-labeling of confident predictions while querying uncertain points, and \cite{stark2015captcha} develop a method tailored for CAPTCHA recognition. 
These CNN-oriented techniques succeed in narrow domains but do not scale to general-purpose image classification tasks.  

A different stream of research views active learning through the lens of optimization. 
Formulations that balance uncertainty and diversity often cast the problem as a discrete program with convex relaxations~\cite{ElhamifarSYS13,YangMNCH15,Guo10}, but these require $n^2$ variables for $n$ data points, making them impractical for large datasets. 
More specialized efforts adapt active learning to k-nearest neighbors, naive Bayes, or logistic regression~\cite{WeiIB15,HoiJZL06,GuoS07,YuBT06}. 
Within this optimization-oriented family, \cite{DemirPB11} propose a two-stage approach that first filters uncertain points and then enforces diversity. 
Other approaches include efforts are by \cite{JoshiPP10} and \cite{WangY15}: the former introduces a related optimization problem without theory, while the latter minimizes maximum mean discrepancy. 
Our framework builds on these ideas by introducing the notion of core-set loss, providing both a theoretical foundation and practical applicability to deep models.  

Finally, classical acquisition strategies remain an influential part of the literature. 
Early surveys such as \cite{settles2009active} summarize information-theoretic approaches~\cite{MacKay92b}, ensemble-based methods~\cite{McCallumN98,FreundSST97}, and uncertainty-driven heuristics~\cite{TongK01,JoshiPP09,LiG13}. 
In particular, uncertainty-based sampling focuses on querying ambiguous points, using entropy~\cite{JoshiPP09} or margin-based distances~\cite{TongK01,Brinker03}. 
Bayesian active learning has also been widely studied, traditionally with Gaussian processes to estimate error reduction or predictive improvement~\cite{RoyM01,KapoorGUD07}. 
While powerful in small-scale settings, these approaches do not scale to modern applications. 

\section{Preliminaries}
Let $V = \mathrm{span}\{v_1, \dots, v_k\} \subset \mathbb{R}^d$ be a $k$-dimensional subspace with an orthonormal basis $\{v_1, \dots, v_k\}$.  
For the standard inner product $\langle x, v_i \rangle$, the projection of a vector $x \in \mathbb{R}^d$ onto $V$ is defined as $\mathrm{Proj}(x, V) := \sum_{i=1}^k \langle x, v_i \rangle v_i$.
Let $A \in \mathbb{R}^{n \times d}$ be a data matrix. 
The \emph{singular value decomposition (SVD)} of $A$ is $A = U \Sigma V^\top$, where $U \in \mathbb{R}^{n \times n}$ and $V \in \mathbb{R}^{d \times d}$ are orthogonal matrices containing the left and right singular vectors, respectively, and $\Sigma \in \mathbb{R}^{n \times d}$ is a diagonal matrix with non-negative singular values $\sigma_1 \ge \sigma_2 \ge \ldots \ge 0$. 

For a target rank $k < \min(n,d)$, the \emph{best rank-$k$ approximation} of $A$ in the Frobenius norm is obtained by truncating the SVD to the top $k$ singular values $A_k = U_k \Sigma_k V_k^\top$, where $U_k \in \mathbb{R}^{n \times k}$, $V_k \in \mathbb{R}^{d \times k}$, and $\Sigma_k \in \mathbb{R}^{k \times k}$ contain the top $k$ singular vectors and singular values. 
The Eckart-Young-Mirsky theorem \cite{eckart1936approximation} guarantees that 
\[A_k = \arg\min_{\substack{B \in \mathbb{R}^{n \times d} \\ \mathrm{rank}(B) \le k}} \|A - B\|_F^2,\]
i.e., $A_k$ is the unique rank-$k$ matrix that minimizes the squared Frobenius norm of the approximation error. The optimal cost is often denoted by 
\[\Phi_k(A) := \min_{\substack{B \in \mathbb{R}^{n \times d} \\ \mathrm{rank}(B) \le k}} \|A - B\|_F^2 = \sum_{i=k+1}^{\min(n,d)} \sigma_i^2.\]
This low-rank approximation captures the most significant directions of variance in the data, and forms the foundation for coresets and data selection under low-rank losses.

\section{Problem Definition}

\subsection{Batch Data Selection and Loss Decomposition}
\label{sec:3_1}
We formally define the batch data selection problem in the context of low-rank losses. 
Let $D = \{(x_i, y_i)\}_{i=1}^n$ be a dataset sampled i.i.d.\ from a distribution $\calP$ over $\calX \times \calY$. 
We assume without loss of generality that $\calX=\mathbb{R}^m$ and so we interchangeably use $D$ as the dataset and as a matrix of dimension $n\times m$. 
Given a sample $x$ and its label $y$, an algorithm $\calA$ trains a model to produce a predicted label $\widehat{y}$, and incurs a loss based on the discrepancy between $y$ and $\widehat{y}$. 
We denote this loss by $\ell(x, y; \calA)$. 
The goal is to select a subset $S \subseteq D$ of size at most $s$ and associate a weight function $w : S \to \mathbb{R}^+$ such that
\[\Delta(S) := \left\lvert \sum_{i=1}^n \ell(x_i, y_i; \calA) - \sum_{x \in S} w(x)\, \ell(x, y; \calA) \right\rvert\]
is minimized, while keeping the number of expensive model evaluations (i.e., queries to $\ell$) small. 
Observe that the expected loss of $\calA$ can be decomposed as
\begin{align*}
&\mathbb{E}_{(x,y)\sim \calP} \ell(x,y;\calA) \\
&\le 
\underbrace{\Bigl\lvert \mathbb{E}_{(x,y)\sim \calP} \ell(x,y;\calA) - \frac{1}{n}\sum_{i=1}^{n} \ell(x_i, y_i;\calA) \Bigr\rvert}_{\text{Generalization Error}}\\ 
&+ \underbrace{\frac{1}{|C|}\sum_{j\in C} \ell(\widetilde{x}_j, \widetilde{y}_j;\calA)}_{\text{Training Error}} \\
&\quad + 
\underbrace{\Bigl\lvert \frac{1}{n}\sum_{i=1}^{n} \ell(x_i, y_i;\calA) - \frac{1}{|C|}\sum_{j\in C} \ell(x_j, y_j;\calA) \Bigr\rvert}_{\text{Coreset Loss}},
\end{align*}
where $\widetilde{S} = \{(\widetilde{x}_j, \widetilde{y}_j)\}_{j\in C}$ is a coreset constructed from $S$ via some algorithm $\calA$. 
This decomposition clarifies the sources of error in batch selection: the generalization error measures the gap between empirical and population loss, the training error captures how well the model fits the selected coreset, and the coreset loss quantifies how faithfully the coreset approximates the full dataset. 
Our low-rank sampling strategy explicitly targets minimizing the coreset loss while requiring only a small number of expensive evaluations of $\ell$, ensuring efficient and effective model training.

We remark that like \cite{AxiotisCHJMSWW24}, our formulation of data selection allows for \emph{weighted sampling}, where each selected point can carry an individual weight $w(x)$, rather than assuming uniform weights like \cite{SenerS18}. 
This is natural in the low-rank setting, where sampling probabilities derived from leverage scores or spectral sensitivities inherently produce non-uniform contributions to the coreset. 
Furthermore, rather than focusing on the loss after retraining the model on the subset, we consider the current model loss $\ell(x, y; \calA)$. 
Intuitively, if a subset $S$ approximates the loss of the full dataset well under the current model, it contains representative points that capture the dominant directions of the data. 
This allows subsequent training on $S$ to closely approximate training on the entire dataset without requiring assumptions on the label distribution or zero training loss. 
By framing the problem in this way, we can provide strong theoretical guarantees for low-rank coresets while maintaining flexibility and applicability to a wide range of models and loss functions.

\subsection{Algorithm: Sensitivity Sampling for Low-Rank Loss Approximation}

To efficiently construct the low-rank approximation coreset, we propose an algorithm that leverages sensitivity sampling based on the low-rank structure of \( D \). 
Instead of clustering, we use low-rank approximation techniques (e.g., via singular value decomposition) to compute importance scores for data points. 
From the above assumptions, we show that a carefully selected small subset of data, constructed via sensitivity sampling based on low-rank structure, provides a provably accurate approximation of the overall loss. 
The proof of \Cref{thm:loss_approx} leverages the low-rank structure of the dataset to construct an importance-weighted coreset. 
The key idea is to decompose each data point $x$ into two components: its projection $v(x)$ onto a low-rank subspace $V$, and its residual $r(x)$ orthogonal to $V$. 
The Lipschitz-like and basis decomposition assumptions ensure that the loss $\ell(x)$ can be tightly approximated by the contributions along the basis directions plus a small penalty for the residual. 
Intuitively, this means that the dominant directions of variance capture most of the loss, while the orthogonal directions contribute only a limited, controllable amount:

\[\min_{\alpha} \sum_j p_{x_j} \ell\left(\sum_i \alpha_{j,i}^2 x_{j,i}\right)\]

Using this decomposition, the algorithm defines a sensitivity score for each point, reflecting how much it contributes to the total loss relative to its projection and residual. 
Sampling points proportionally to these scores ensures that high-impact points are more likely to be included in the coreset. 
By weighting the sampled points appropriately, the resulting estimator becomes unbiased. 
A standard concentration inequality is then used to bound the deviation of the weighted sum from the total loss, giving the high-probability guarantee. 
Overall, the proof formalizes the intuition that a small, carefully weighted subset of points suffices to approximate the loss of the entire dataset, with an additive term proportional to the optimal rank-$k$ approximation error $\Phi_k(D)$. 

We note that there exist many algorithms that achieve a bicriteria approximation for row subset selection:
\begin{theorem}
\cite{GuruswamiS12}
There exists a polynomial-time algorithm that selects at most $2k$ columns that serve as a constant-factor approximation to the best rank-$k$ approximation. 
\end{theorem}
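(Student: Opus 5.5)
This is the Guruswami--Sinop column subset selection result: there is a polynomial-time algorithm that selects at most $2k$ columns (equivalently, for our purposes, rows of $D^\top$) whose span gives a constant-factor approximation to $\Phi_k$. My approach is volume sampling followed by derandomization. Given $A\in\mathbb{R}^{n\times d}$, sample a set $T$ of $r$ columns with probability proportional to $\det(A_T^\top A_T)$, the squared volume of the parallelepiped they span. The first step is to show that the expected projection error is controlled by the tail spectrum:
\[\mathbb{E}_T\|A-\Pi_{A_T}A\|_F^2 \;=\; (r+1)\,\frac{e_{r+1}(\sigma_1^2,\ldots,\sigma_d^2)}{e_r(\sigma_1^2,\ldots,\sigma_d^2)}.\]
Here $e_j$ is the $j$-th elementary symmetric polynomial of the squared singular values. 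I would prove this from the identity $\det(A_{T\cup\{j\}}^\top A_{T\cup\{j\}})=\det(A_T^\top A_T)\cdot\|A_j-\Pi_{A_T}A_j\|_2^2$, sum over all $T$ and $j$, and then use Cauchy--Binet, which says $\sum_{|T|=r}\det(A_T^\top A_T)=e_r(\sigma^2)$.

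The second step, which I expect to be the main obstacle, is to bound this ratio of elementary symmetric polynomials by a constant times $\sum_{i>k}\sigma_i^2$ when $r\ge 2k$. I would start from the known inequality
\[\frac{e_{r+1}(\sigma^2)}{e_r(\sigma^2)}\;\le\;\frac{1}{r+1-k}\sum_{i>k}\sigma_i^2 \qquad\text{for } r\ge k.\]
To prove it, split each $(r+1)$-subset according to how many indices it takes from $\{1,\ldots,k\}$, and use Newton/Maclaurin-type domination of $e_j$ of the tail values by $e_{j-1}$ of the tail times their sum, with careful counting. Substituting this into the first step gives $\frac{r+1}{r+1-k}\Phi_k(A)$. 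For $r=2k$ this is at most $2\,\Phi_k(A)$, a constant factor, as required.

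The third step is derandomization, so that the algorithm runs in polynomial time. Fix columns one at a time by the method of conditional expectations. The conditional expectation given a partial set $T_0$ is again a ratio of elementary symmetric polynomials, now of the singular values of the residual $A-\Pi_{A_{T_0}}A$. Since these quantities are computable as characteristic-polynomial coefficients, each greedy step is polynomial time. The step always keeps the conditional expectation from increasing, so the final deterministic set of $2k$ columns achieves error at most $2\,\Phi_k(A)$. The polynomial-time claim then follows, and it can be applied to the transpose to obtain row subset selection for $D$.
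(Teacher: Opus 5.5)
Your proposal is correct and is essentially the Guruswami--Sinop argument that the paper cites without proof. It has the same ingredients: volume sampling, the exact identity $\mathbb{E}_T\|A-\Pi_{A_T}A\|_F^2=(r+1)\,e_{r+1}(\sigma^2)/e_r(\sigma^2)$, the resulting $\frac{r+1}{r+1-k}\Phi_k(A)$ bound, and derandomization by conditional expectations on the residual matrix.

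The step you flag as the main obstacle is actually routine. Write $e_{r+1}(\sigma^2)=\sum_{i\le k}e_i(\sigma_1^2,\ldots,\sigma_k^2)\,e_{r+1-i}(\mu)$, where $\mu$ denotes the tail values. Each term obeys $(r+1-i)\,e_{r+1-i}(\mu)\le e_1(\mu)\,e_{r-i}(\mu)$, and $r+1-i\ge r+1-k$, which gives your inequality directly.

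In the derandomization step, the fact that the conditional expectation never increases comes from writing $\mathbb{E}[f(T)]=\sum_j\frac{\Pr[j\in T]}{r}\,\mathbb{E}[f(T)\mid j\in T]$, which is a convex combination.
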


\begin{algorithm}[!htb]
\caption{Sensitivity Sampling for Loss Approximation via Row Subset Selection}
\label{alg:sensitivity-rss}
\begin{algorithmic}[1]
\Require Dataset $D = \{x_1,\ldots,x_n\} \subseteq \mathbb{R}^m$; target subset size $k$; error parameter $\eps > 0$; constants $\lambda,\gamma$ corresponding to \Cref{assumption}. 
\Ensure A weighted subset $S \subseteq D$ of size $s$ that approximates the total loss.
\State Compute a row subset $C \subseteq D$ (e.g., via leverage-score or adaptive sampling)
\State Let $V = \mathrm{span}(C)$ and let $v_1,\ldots,v_t$ be a basis for $V$
\State For each point $x \in D$, compute the residual vector
\[r(x) \gets x - \Proj(x,V).\]
\State Let $\Proj(x,V)=\alpha_1v_1+\ldots+\alpha_tv_t$
\State $\sigma(x)\gets(\gamma+1)(\alpha_1^2\ell(v_1)+\ldots+\alpha_t^2\ell(v_t))+\gamma k\xi+\lambda\|r(x)\|_2^2,$
\State Normalize the scores to obtain a probability distribution:
\[p(x) = \frac{\sigma(x)}{\sum_{y\in D} \sigma(y)}.\]
\State Set $s \gets \left\lceil \eps^{-2}\left(2 + \frac{2\eps}{3}\right)\right\rceil$.
\State Sample $s$ points from $D$ independently according to $\{p(x)\}_{x\in D}$.
\For{each sampled point $x$}
    \State Set its weight $w(x) \gets \frac{1}{s\,p(x)}$.
\EndFor
\State \textbf{return} $S$ with associated weight function $w(\cdot)$.
\end{algorithmic}
\end{algorithm}


We defer the full proof of \Cref{thm:loss_approx_rss} to \appref{app:missing}. 
We also give the algorithm and analysis corresponding to \Cref{thm:loss_approx}, when the low-rank factors need not be data points to receive labels. 

\section{Real-World Dataset Experiments}
In this section, we evaluate the effectiveness of our low-rank sensitivity sampling method on a variety of real-world datasets and downstream tasks. 
We begin with classical tabular datasets to measure coreset approximation quality and its impact on predictive performance. 
We then move on to large-scale language model fine-tuning experiments, where data selection plays a crucial role in reducing computational cost while maintaining or improving accuracy. 
Across all experiments, we compare our approach against uniform sampling, clustering-based methods, and other state-of-the-art sensitivity sampling techniques to highlight the advantages of leveraging low-rank structure in the data.

\subsection{Credit card dataset}
We evaluate on the Default of Credit Card Clients dataset \cite{Yeh16, YL09}, which contains 30\,000 records described by 23 attributes, including six months of previous bill statements, repayment statuses, credit limits and demographic variables.  The binary label represents default on the next month’s payment (22\% positive rate).  This heterogeneous, imbalanced dataset is a standard benchmark for subsampling and downstream classification. We empirically verify \Cref{assumption} on this task; see \Cref{app:assumption-validation} for details.

\paragraph{Experimental setup.}
All coreset experiments begin by loading the full dataset, renaming the column \texttt{default payment next month} to \texttt{Class}, dropping the \texttt{ID}, and applying z-score normalization to all 23 feature columns.  We then compute the per-point squared norms $\ell_i = \|x_i\|_2^2$ and the true sum $L_{\mathrm{true}} = \sum_{i=1}^n \ell_i$.  We vary coreset size $s\in\{1000,2000,3000,4000,5000\}$ and repeat 100 independent trials of each method to average results.

For random sampling we draw $s$ points \textit{with replacement} uniformly at random and assign each weight $n/s$.  For clustering, we run a K-Means++ algorithm with maximum 300 iterations. We repeat the clustering 10 times and pick the best results.
on the standardized data, select the nearest training point to each centroid, and weight it by its cluster size.  For sensitivity sampling (Algorithm~\ref{alg:sensitivity}), we run \texttt{TruncatedSVD(n$\_$components=5)} from scikit-learn \cite{sklearn}
to obtain projection vectors, compute projected-loss term $(\gamma+1)\,\alpha^2\ell(v_i)$, basis-loss term $\gamma$, $k$, $\xi$, and residual-loss term $\lambda\,\|r_i\|^2$ with parameters $\gamma=5$, $\lambda=1$, smoothing $10^{-6}$, normalize to probabilities $p_i$, sample $s$ points \textit{with replacement} according to $p_i$, and assign weights $1/(s\,p_i)$.  

In the coreset-error experiment (Figure~\ref{fig:coreset_error}) we measure $\mathrm{error} = \bigl|\sum_j w_j\|x_j\|_2^2 \;-\; L_{\mathrm{true}}\bigr|$, for each trial and average across trials.  In the downstream accuracy experiment (Figure~\ref{fig:accuracy}) we first fit a full logistic regression model (\texttt{solver='liblinear', class\_weight='balanced', max\_iter=1000}) on the training set to obtain per-point logistic losses for sensitivity sampling, then for each $s$ and each method train a logistic model with identical hyperparameters on the weighted coreset and evaluate test accuracy on the held-out 20\%.

\begin{figure*}[!htb]
  \centering
  \begin{subfigure}[b]{0.48\textwidth}
    \includegraphics[width=\linewidth]{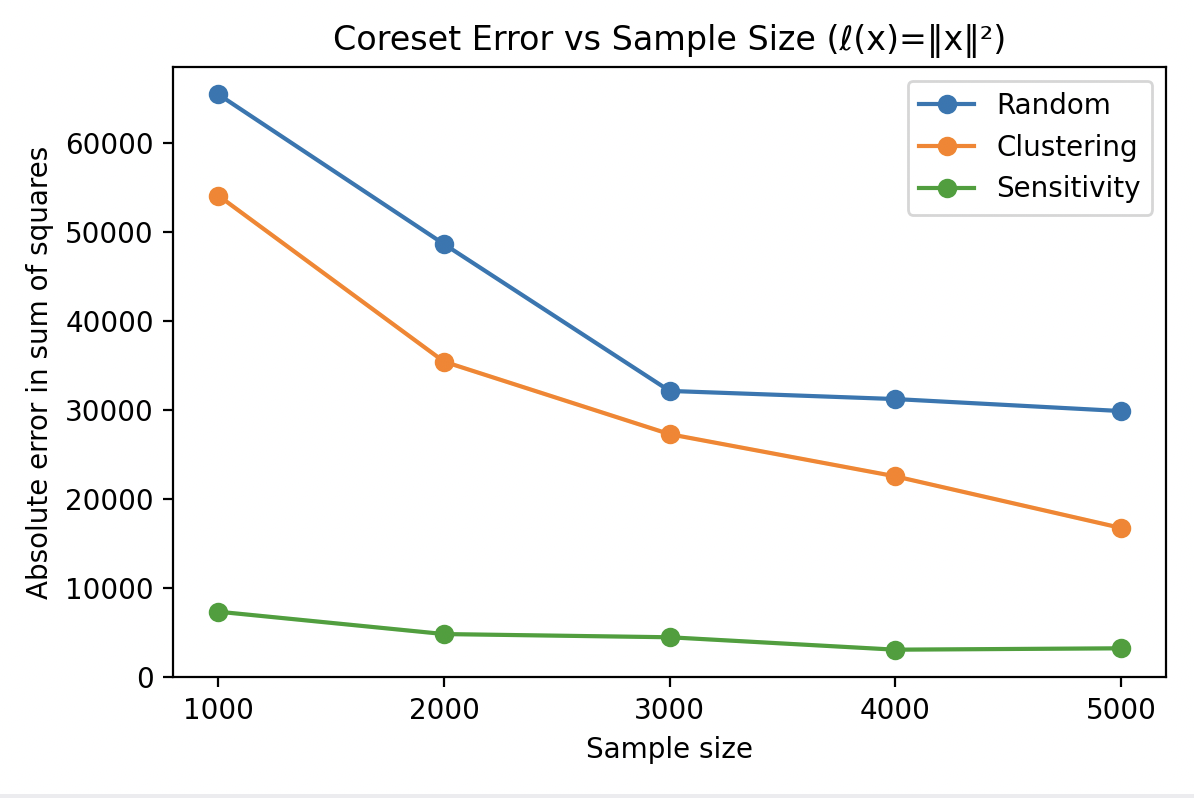}
    \caption{Absolute error in sum of squares vs.\ sample size.}
    \label{fig:coreset_error}
  \end{subfigure}\quad
  \begin{subfigure}[b]{0.48\textwidth}
    \includegraphics[width=\linewidth]{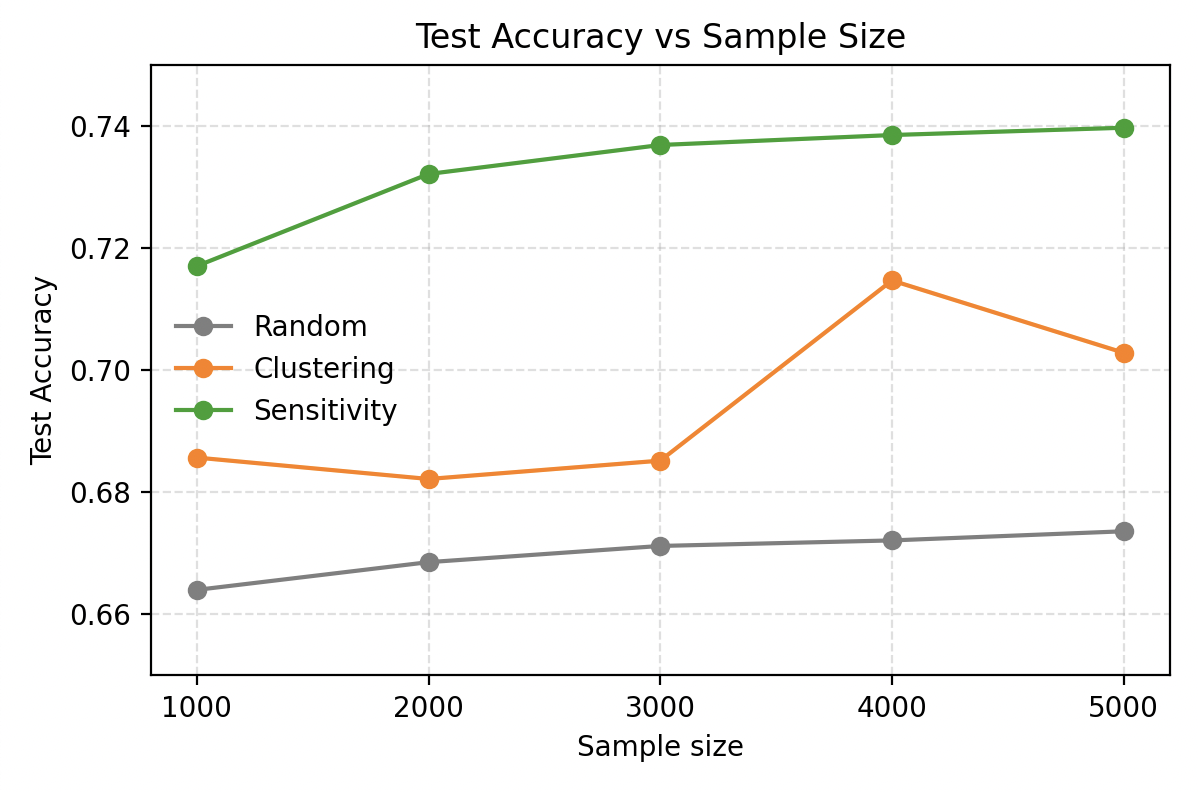}
    \caption{Test accuracy vs.\ sample size.}
    \label{fig:accuracy}
  \end{subfigure}
  \caption{Comparison of Random, Clustering, and Low-Rank Sensitivity sampling on the Default Credit Card dataset.}
  \label{fig:results}
\end{figure*}

\paragraph{Results and discussion.} \Cref{fig:coreset_error} shows that sensitivity sampling results in the lowest approximation error at every sample size, reducing error by an order of magnitude relative to random sampling and by roughly 50\% compared to clustering at $s=1000$, with all methods converging as $s$ increases. 
\Cref{fig:accuracy} then shows that logistic regression trained on sensitivity coresets attains up to 74\% test accuracy at $s=5000$, clustering coresets reach around 70\%, and random sampling only about 67\%.  These results confirm that the low-rank sensitivity algorithm not only tightens coreset-error bounds but also translates into improved predictive performance on an imbalanced, real-world financial dataset.

\subsection{LLM Fine-tuning Experiments}
\subsubsection{Setting}
\paragraph{Models and datasets.}
We fine-tune the standard instruction-tuned Llama3-8B \cite{llama3} and Qwen2.5-3B \cite{Yang2024Qwen25TR} models on three challenging downstream datasets: Grade-School Math (GSM8k) \cite{gsm8k} with
7.47k training and 1.32k test samples, ViGGO \cite{viggo} with 5.1k training and 1.08k test samples, and SQL generation \cite{sql2, sql1} with 30k training
and 1k test samples. This fine-tuning setup is widely used and has appeared in multiple research papers \cite{halo, lotaqaf, rosa}. These tasks are specifically selected because the base models perform poorly on them, making them well-suited for fine-tuning. We closely follow the evaluation strategy of \cite{halo}.

\paragraph{Hyperparameters.}
We largely adopt the training hyperparameters from the HALO code base \cite{halo}.
For fine-tuning Llama3-8B-Instruct, we use the Adam optimizer for one epoch with learning rates $6 \times 10^{-6}$, $4 \times 10^{-5}$, and $3 \times 10^{-5}$ for GSM8k, ViGGO, and SQL, respectively.
All dataset samples are encoded using the standard and efficient BERT embeddings \cite{bert}. For clustering, we employ k-means++ and map each centroid to its closest sample in the dataset. The clustering procedure is repeated 10 times, and the best results are retained. For landmark selection in low-rank approximation, we employ leverage score sampling. 
By default, the number of clusters/landmarks is set to 20\% of the total number of available samples, following the experimental setting of \cite{AxiotisCHJMSWW24}.
Regarding the parameters of \Cref{assumption}, we tune the $\lambda$ value and pick the top performing one when applicable. Additionally, we set $\gamma=0$, and compute $\alpha$ values in the embeddings space using Kernel Ridge Regression (KRR) with an RBF kernel to find the linear combination of landmark loss values.
For Qwen2.5-3B, we use the same hyperparameters except the learning rate on the ViGGO dataset, which is increased to $10^{-4}$

\paragraph{Baselines.}
We consider five baselines: 1) \textit{Full training}: where the data selection is skipped and the model is trained on the full dataset, 2) \textit{Uniform sampling}, where the subset samples are selected uniformly at random, 3) K-Center, the method from \cite{sener2017active}, 4) \textit{Clustering-based sensitivity sampling} \cite{AxiotisCHJMSWW24}, which similar to our method, uses sensitivity sampling, but relies on clustering rather than low-rank approximation, and 5) Graph Cut, which maximizes the sum of similarities between selected and unselected samples, implemented in \cite{schreiber2020apricot}.

\subsubsection{Results}

\begin{table*}[!htb]
\caption{End-to-end fine-tuning validation accuracy on different baselines and datasets. BERT embeddings are used and $k$ is fixed to $25\%$ of the dataset. SS stands for Sensitivity Sampling.}
\centering
\renewcommand{\arraystretch}{1.4} 
\setlength{\tabcolsep}{4pt}      

\scalebox{0.6}{%
\begin{tabular}{l|ccc|ccc|ccc|ccc}
\toprule
\textbf{Dataset} & \multicolumn{3}{c}{\textbf{GSM8k}} & \multicolumn{3}{c}{\textbf{ViGGO}} & \multicolumn{3}{c}{\textbf{SQL}} & \multicolumn{3}{c}{\textbf{Average}} \\
\cmidrule(lr){2-4}\cmidrule(lr){5-7}\cmidrule(lr){8-10}\cmidrule(lr){11-13}
\textbf{Sampling Ratio} & \textbf{25\%} & \textbf{12.5\%} & \textbf{6.25\%} & \textbf{25\%} & \textbf{12.5\%} & \textbf{6.25\%} & \textbf{25\%} & \textbf{12.5\%} & \textbf{6.25\%} & \textbf{25\%} & \textbf{12.5\%} & \textbf{6.25\%} \\
\midrule
\textit{Uniform Sampling}     & 67.7 $\pm$ 0.3 & 65.3 $\pm$ 0.2 & 63.5 $\pm$ 0.5 & 86.3 $\pm$ 0.7 & 68.3 $\pm$ 4.1 & 26.2 $\pm$ 6.2 & 75.6 $\pm$ 0.5 & 74.1 $\pm$ 0.5 & 66.2 $\pm$ 3.5 & 76.5 & 69.2 & 52.0 \\
\textit{K-Center}     & 67.4 $\pm$ 0.3 & 65.3 $\pm$ 0.5 & 58.4 $\pm$ 3.3 & 80.3 $\pm$ 2.5 & 51.0 $\pm$ 1.8 & 25.5 $\pm$ 7.7 & \textbf{77.1 $\pm$ 0.3} & \textbf{75.3 $\pm$ 0.4} & \textbf{73.0 $\pm$ 0.5} & 74.9 & 63.9 & 52.3 \\
\textit{Graph Cut}     & 67.0 $\pm$ 0.5 & \textbf{67.7 $\pm$ 0.7} & 64.9 $\pm$ 1.6 & 81.7 $\pm$ 2.9 & 58.6 $\pm$ 3.8 & 13.6 $\pm$ 4.7 & 73.3 $\pm$ 0.6 & 71.1 $\pm$ 0.2 & 58.5 $\pm$ 7.5 & 74.0 & 65.8 & 45.7 \\
\textit{Clustering-based SS}  & \textbf{70.2 $\pm$ 0.1} & 66.6 $\pm$ 1.2 & 65.2 $\pm$ 1.1 & 86.6 $\pm$ 2.8 & \textbf{72.8 $\pm$ 1.7} & \textbf{30.3 $\pm$ 3.9} & 75.6 $\pm$ 0.5 & 73.7 $\pm$ 0.5 & 68.3 $\pm$ 3.6 & 77.5 & \textbf{71.0} & 54.6 \\
\textit{Low-rank SS (ours)}     & 68.4 $\pm$ 0.1 & 67.1 $\pm$ 0.9 & \textbf{65.4 $\pm$ 1.6} & \textbf{88.3 $\pm$ 0.2} & 69.7 $\pm$ 5.2 & 28.8 $\pm$ 1.1 & 76.1 $\pm$ 0.2 & 74.4 $\pm$ 0.2 & 70.4 $\pm$ 1.0 & \textbf{77.6} & 70.4 & \textbf{54.9} \\
\midrule 
\textit{Full ($100\%$)}        &  & \textbf{69.3 $\pm$ 0.5} &  &  & \textbf{94.0 $\pm$ 0.3} &  &  & \textbf{79.9 $\pm$ 0.5} & & & \textbf{81.1} &  \\
\bottomrule
\end{tabular}%
} 
\label{tab:main}
\end{table*}

\paragraph{Main results.}
We begin by fine-tuning the Llama3-8B model on $25\%$, $12.5\%$, and $6.25\%$ of each dataset, selected using various sampling methods. Table \ref{tab:main} reports the validation accuracy of our method compared to the baselines. The results show that our method consistently outperforms uniform sampling. On average, it also achieves higher accuracy than clustering-based sensitivity sampling \cite{AxiotisCHJMSWW24} in most datasets, demonstrating the benefit of leveraging low-rank approximation for data selection.

\paragraph{Runtime discussion.}
The selection process for both cluster-based and low-rank sensitivity sampling requires forward passes on $k=20\%$ of the dataset. Assuming a backward pass is twice as expensive as a forward pass \cite{kaplan2020scaling}, this corresponds to approximately $6.67\%$ of the total runtime for training on the full dataset.

\paragraph{Study on dataset structure.}
Here we analyze the training split of GSM8k to examine whether it exhibits a more clustered or low-rank structure, such as \Cref{assumption} for the latter. 
To this end, across a range of $k$ values, we perform the following experiments:

\begin{enumerate}[nosep]
    \item[$i$)] We cluster the per-sample embeddings into $k$ clusters and compute the average euclidean distance from each sample to its closest cluster center, and compare this with the average low-rank approximation error when representing the dataset using $k$ basis samples.
    \item[$ii$)] We measure the average loss difference between each sample's true loss and that of its nearest cluster center, and compare it against the average difference between the true loss and the low-rank approximated loss.
\end{enumerate}

\begin{figure}[!htb]
    \centering
    \begin{subfigure}{0.48\textwidth} 
        \centering
        \includegraphics[width=\linewidth]{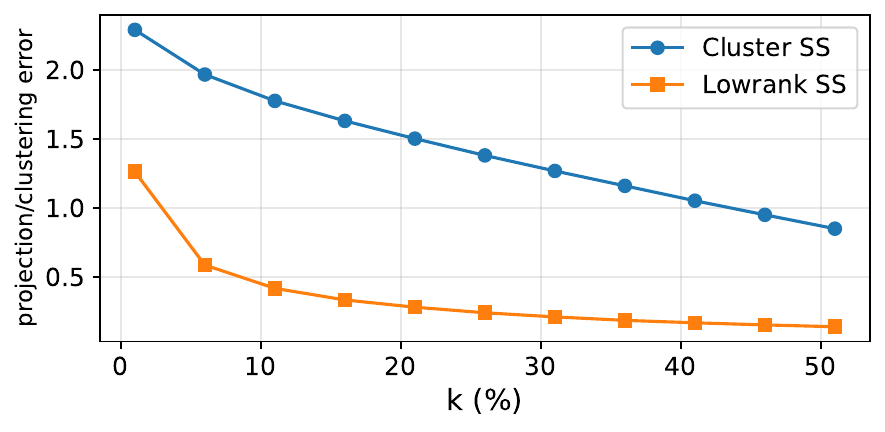}
        \caption{Average projection/clustering error}
    \end{subfigure}
    \hfill
    \begin{subfigure}{0.48\textwidth}
        \centering
        \includegraphics[width=\linewidth]{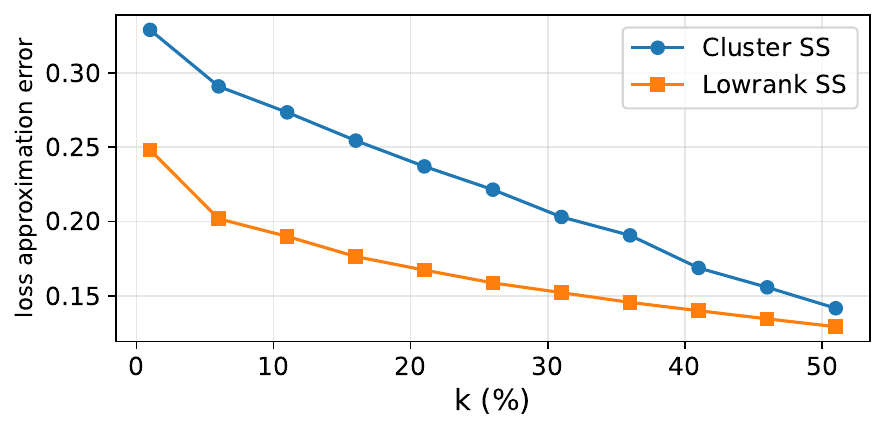}
        \caption{Average loss approximation error}
    \end{subfigure}
    \caption{Comparison of cluster-based and low-rank sensitivity sampling methods on the GSM8k dataset using BERT embeddings. The values of $k$ are expressed as percentages of the entire dataset.}
    \label{fig:lowrank_vs_cluster}
\end{figure}

\Cref{fig:lowrank_vs_cluster} shows that, in both cases, the dataset yields a smaller error under the low-rank approximation, supporting \Cref{assumption} and our claim that the dataset (GSM8k in this case) is more aligned with a low-rank structure than with a purely clustered one.

\paragraph{Average loss approximation quality.}
We next investigate how well the (weighted) subset selected by each method approximates the average loss. We vary $k$ and $\lambda$, and in each case, select 2000 samples from the GSM8k dataset and measure the average loss approximation error ($\Delta(S)$, \Cref{sec:3_1}). 
\Cref{fig:heatmap} presents heatmaps for both cluster-based and low-rank sensitivity sampling, showing that low-rank consistently achieves lower error. 
An interesting observation in the clustering case is that, at $\lambda = 1$, increasing the number of clusters degrades the approximation quality. This occurs because a large $\lambda$ causes the sampling score to be dominated by the geometric distance $r$, leading the algorithm to prioritize outliers over points from high-loss regions. When the number of clusters $k$ increases, the data space is partitioned more finely, reducing $r$ for inlier points and further biasing the selection toward outliers. Consequently, the selected subset becomes less representative of the overall distribution, resulting in poorer average loss approximation. A similar effect occurs for low-rank sampling at $\lambda \ge 100$, though these cases are omitted from the plots for clarity.

\begin{figure}[!htb]
    \centering
    \includegraphics[width=\linewidth]{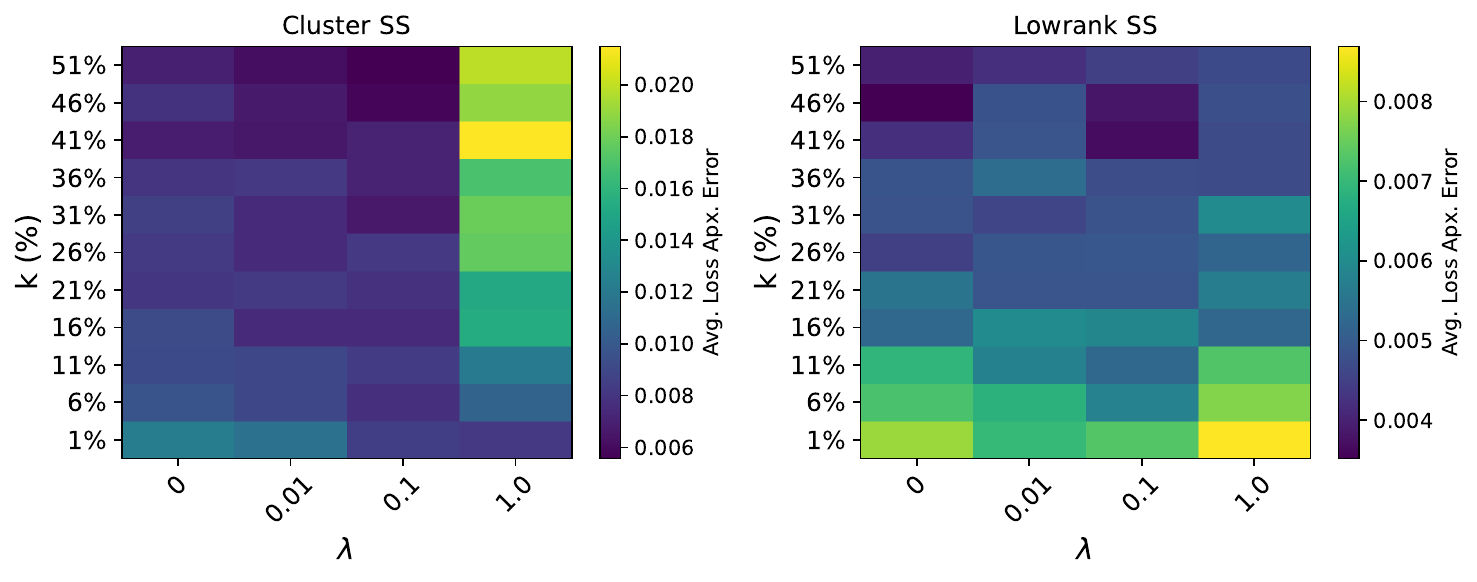}
    \caption{Average loss approximation error across different $k$ and $\lambda$ values. In each case, 2000 (weighted) samples are selected from the GSM8k dataset, and the average of 100 trials is reported.}
    \label{fig:heatmap}
\end{figure}

\paragraph{Alternative objective and embedding.}
Following \cite{AxiotisCHJMSWW24}, we repeat our $12.5\%$ selection experiments on GSM8k, but replace the loss with the norm of per-sample gradients in \Cref{alg:sensitivity}. Gradient norm serves as a proxy for capturing training dynamics \cite{AxiotisCHJMSWW24}. 
\Cref{fig:gradnorm-lambda} compares cluster-based and low-rank sensitivity sampling across different $\lambda$ values. The results indicate a slight advantage for low-rank sampling, which also appears more robust to the choice of $\lambda$, consistently outperforming uniform sampling for all values considered. 
Additionally, the same figure presents results for replacing BERT embeddings \cite{bert} with GTR-base embeddings \cite{gtr}. The results indicate that our positive findings remain consistent with these embeddings as well.

\begin{figure}[!htb]
    \centering
    \includegraphics[width=\linewidth]{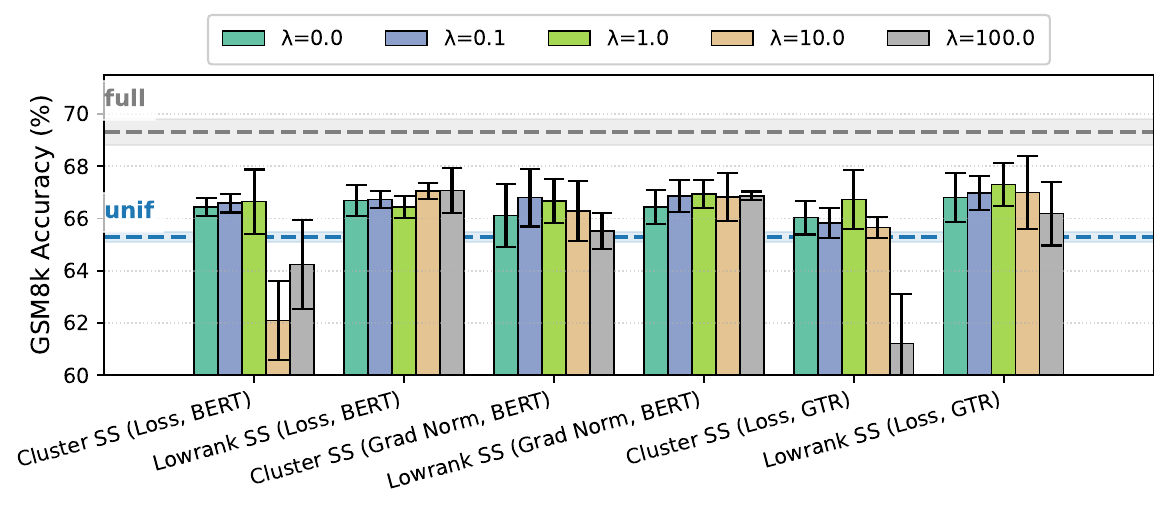}
    \caption{Comparison of alternative objective functions (loss vs. gradient norm) and embedding functions, including BERT \cite{bert} and GTR \cite{gtr}, in terms of end-to-end validation accuracy, across various $\lambda$ choices. Experiments are conducted on the GSM8k dataset with $k$ fixed at $20\%$, and the selected subset size fixed at $12.5\%$ of the dataset.}
    \label{fig:gradnorm-lambda}
\end{figure}

\paragraph{Qwen2.5-3B Results.}
To verify that our method generalizes beyond a single model, we repeat the ViGGO experiments on Qwen2.5-3B \cite{Yang2024Qwen25TR} at $25\%$ and $12.5\%$ compression rates. As summarized in \Cref{tab:qwen}, Lowrank SS continues to outperform all baselines by clear margins, improving over the strongest baseline (Clustering-based SS) by $4.34$ and $3.08$ points at $25\%$ and $12.5\%$ compression, respectively. 

\paragraph{Transferability.}
We ask whether subsets selected by one model transfer to another. Specifically, we take the ViGGO $25\%$ and $12.5\%$ subsets selected using Llama3-8B and use them to train Qwen2.5-3B. \Cref{tab:transfer} reports these results. We find that the selected subsets transfer remarkably well: subsets chosen with Llama3-8B not only match but \emph{exceed} those chosen with Qwen2.5-3B itself when used to train Qwen2.5-3B, with the gap most pronounced at $12.5\%$ compression ($55.74$ vs.\ $51.54$). This suggests that the signal captured by our low-rank criterion reflects intrinsic properties of the data rather than model-specific artifacts.

\begin{table}[t]
\centering
\small
\setlength{\tabcolsep}{4pt}
\caption{ViGGO results with Qwen2.5-3B at $25\%$ and $12.5\%$ compression. Our method outperforms all baselines at both compression rates.}
\label{tab:qwen}
\begin{tabular}{lcc}
\toprule
\textbf{Method} & \textbf{25\%} & \textbf{12.5\%} \\
\midrule
Uniform              & 68.35 & 45.38 \\
K-Center             & 59.24 & 43.14 \\
Graph Cut            & 62.89 & 38.94 \\
Clustering-based SS  & 71.57 & 48.46 \\
\midrule
Lowrank SS (ours)    & \textbf{75.91} & \textbf{51.54} \\
\bottomrule
\end{tabular}
\end{table}

\begin{table}[t]
\centering
\small
\setlength{\tabcolsep}{3pt}
\caption{Transferability of selected subsets on ViGGO: subsets are selected with either Qwen2.5-3B or Llama3-8B and used to train Qwen2.5-3B. Subsets selected by the larger Llama3-8B transfer well and even surpass selection with the target model itself.}
\label{tab:transfer}
\begin{tabular}{llcc}
\toprule
\textbf{Method} & \textbf{Selector} & \textbf{25\%} & \textbf{12.5\%} \\
\midrule
Uniform              & --     & 68.35 & 45.38 \\
K-Center             & --     & 59.24 & 43.14 \\
\midrule
Clustering-based SS  & Qwen   & 71.57 & 48.46 \\
Lowrank SS (ours)    & Qwen   & 75.91 & 51.54 \\
\midrule
Clustering-based SS  & Llama  & 72.27 & 47.06 \\
Lowrank SS (ours)    & Llama  & \textbf{76.05} & \textbf{55.74} \\
\bottomrule
\end{tabular}
\end{table}

\paragraph{Effect of Basis Selection Algorithm.}
A natural question is how sensitive our method is to the specific algorithm used to choose the basis samples. To isolate this factor, we repeat the ViGGO $12.5\%$ experiments on Qwen2.5-3B while varying only the basis-selection procedure: (1) uniform random selection of basis samples, (2) selecting basis samples via $k$-medians clustering, and (3) leverage score sampling, as used in our main method. \Cref{tab:basis-ablation} reports the resulting accuracies. Leverage score sampling yields the best performance, with a small edge over uniform basis selection ($51.54$ vs.\ $50.70$). Both, however, outperform $k$-medians by a notable margin ($\sim 5$--$6$ points), suggesting that representativeness-based clustering is a poor fit for choosing a basis in our low-rank framework. Notably, even the uniform variant remains competitive with the strongest baselines in \Cref{tab:qwen}, indicating that the gains of our method come primarily from the low-rank selection criterion itself, with the basis-sampling strategy providing an additional, smaller improvement.

\begin{table}[t]
\centering
\small
\setlength{\tabcolsep}{4pt}
\caption{Ablation on the basis-selection algorithm within Lowrank SS, evaluated on ViGGO at $12.5\%$ compression with Qwen2.5-3B. Leverage score sampling performs best, while $k$-medians lags both alternatives by a clear margin.}
\label{tab:basis-ablation}
\begin{tabular}{lc}
\toprule
\textbf{Basis Selection} & \textbf{Accuracy} \\
\midrule
Uniform                  & 50.70 \\
$k$-Medians              & 45.80 \\
\midrule
Leverage Score (ours)    & \textbf{51.54} \\
\bottomrule
\end{tabular}
\end{table}


\section*{Acknowledgments}
David P. Woodruff is supported in part Office of Naval Research award number N000142112647, and a Simons Investigator Award. 
Samson Zhou is supported in part by NSF CCF-2335411. 
Samson Zhou gratefully acknowledges funding provided by the Oak Ridge Associated Universities (ORAU) Ralph E. Powe Junior Faculty Enhancement Award. 


\bibliography{references}
\bibliographystyle{alpha}

\newpage
\appendix

\section{Missing Proofs}
\applab{app:missing}

To derive theoretical guarantees for our low-rank sampling framework, we assume the following smoothness condition on the loss function $\ell$ and the low-dimensional factor $V$. 
Let $V = \mathrm{span}\{v_1, \ldots, v_k\}$ be a $k$-dimensional subspace, for instance corresponding to the top singular vectors, principal components, or basis directions from a low-rank factorization. 
For any point $y \in \mathbb{R}^d$, decompose
\[y = \alpha_1 v_1 + \dots + \alpha_k v_k + r(y), \quad r(y) = \Proj_{V^\perp}(y),\]
where $\alpha_i = \langle y, v_i \rangle$ and $r(y)$ is the component orthogonal to $V$, i.e., the projection of $y$ onto $V^\perp$. 
Let $v(y) = \Proj_V(y) = \alpha_1 v_1 + \dots + \alpha_k v_k$. 
\begin{assumption}
\label{assumption:lra}
We assume there exist constants $\lambda, \gamma > 0$ such that
\[|\ell(y) - \ell(v(y))| \le \lambda \| r(y) \|_2^2,\qquad|\ell(v(y)) - (\alpha_1^2 \ell(v_1) + \dots + \alpha_k^2 \ell(v_k))| \le \gamma \sum_{i=1}^k |\alpha_i^2 - 1| \,\ell(v_i).\]
\end{assumption}
Intuitively, this condition decomposes the loss at $y$ into two components: a weighted sum of the losses along each basis direction $v_i$, with weights $\alpha_i^2$, and a penalty proportional to the squared norm of the component orthogonal to $V$, $\|r(y)\|_2^2$.

\begin{algorithm}[!htb]
\caption{Sensitivity Sampling for Low-Rank Loss Approximation}
\label{alg:sensitivity}
\begin{algorithmic}[1]
\Require Dataset $D = \{x_1,\ldots,x_n\}$; target rank $k$; error parameter $\eps > 0$; Constants $\lambda,\gamma$ corresponding to \Cref{assumption}. 
\Ensure A weighted subset $S \subseteq D$ of size $s$ that approximates the total loss.
\State Compute a rank-$k$ approximation $V$ of $D$ (e.g., via SVD) and let $v_1,\ldots,v_k$ be a basis for $V$
\State For each point $x \in D$, compute the residual vector
\[r(x) \gets x - \Proj(x,V).\]
\State Let $\Proj(x,V)=\alpha_1v_1+\ldots+\alpha_kv_k$
\State $\sigma(x)\gets(\gamma+1)(\alpha_1^2\ell(v_1)+\ldots+\alpha_k^2\ell(v_k))+\gamma k\xi+\lambda\|r(x)\|_2^2,$
\State Normalize the scores to obtain a probability distribution:
\[p(x) = \frac{\sigma(x)}{\sum_{y\in D} \sigma(y)}\,.\]
\State Set $s \gets \left\lceil \eps^{-2}\left(2 + \frac{2\eps}{3}\right)\right\rceil$.
\State Sample $s$ points from $D$ independently according to $\{p(x)\}_{x\in D}$.
\For{each sampled point $x$}
    \State Set its weight $w(x) \gets \frac{1}{s\,p(x)}$.
\EndFor
\State \textbf{Return} $S$ with associated weight function $w(\cdot)$.
\end{algorithmic}
\end{algorithm}

These assumptions are natural in many machine learning settings. For example, in low-rank regression, PCA, or matrix completion, the dominant directions of the data capture most of the variance, while deviations along the orthogonal directions contribute minimally to the loss. 
In LLM fine-tuning or embedding-based models, top singular vectors often align with the most informative components, and residual directions carry less signal. 
Similar behavior is observed in low-rank adaptation techniques such as LoRA~\cite{HuSWALWWC22,XuXG0CZC0024,wu2024dlora,liloftq24}, where trainable low-rank matrices capture the key directions in the parameter space. 
This indicates that many real-world datasets are approximately low-rank, making these assumptions a reasonable abstraction for constructing coresets and selecting informative data efficiently. 

Given \Cref{assumption:lra}, our procedure in \Cref{alg:sensitivity} uses a form of importance sampling~\cite{FeldmanL11,LangbergS10,BravermanFLSZ21}, to achieve loss proportional to the best possible loss achievable by low-rank approximation. 
We remark this is a standard procedure for constructing coresets~\cite{Har-PeledM04,Chen09,FeldmanL11,LangbergS10,Phillips16,MirzasoleimanBL20,Feldman20,BravermanHMSSZ21,MahabadiWZ22,Tukan0ZBF22,MussayFZBO22,Cohen-AddadWZ23,JiangPW23,TukanZMRBF23,ZhengZBJP23,SongVWZ24,KMS25}. 
Then our result is as follows:

\thmlossapprox*
\begin{proof}
Let $L := \sum_{x\in D}\ell(x)$ be the total loss over the dataset $D$, and define 
\[\Phi_k(D)=\min_{\mathrm{rank}(V)\le k}\|D-V\|_F^2,\]
the best rank-$k$ approximation error of $D$. 
For every point $x\in D$, let $v(x)=\Proj(x,V)$ be the projection of $x$ onto the chosen low-rank approximation $V$ and let $r(x)=x-\Proj(x,V)$ be the orthogonal complement so that $x=v(x)+r(x)$. 
By the Lipschitz condition (with constant $\lambda$), we have for every $x\in D$:
\[\bigl|\ell(x)-\ell(v(x))\bigr|\le \lambda\cdot\|r(x)\|_2^2.\]
Suppose we have $v(x)=\alpha_1v_1+\ldots+\alpha_kv_k$. 
Then we also have
\[\bigl|\ell(v(x))-(\alpha_1^2\ell(v_1)+\ldots+\alpha_k^2\ell(v_k))\bigr|\le\gamma\left(|\alpha_1^2-1|\ell(v_1)+\ldots+|\alpha_k^2-1|\ell(v_k)\right),\]
where $v(x)=\alpha_1v_1+\ldots+\alpha_kv_k$. 
Hence by triangle inequality, we have 
\begin{align*}
\ell(x)&\le(\alpha_1^2\ell(v_1)+\ldots+\alpha_k^2\ell(v_k))+\gamma\left(|\alpha_1^2-1|\ell(v_1)+\ldots+|\alpha_k^2-1|\ell(v_k)\right)+\lambda\|r(x)\|_2^2\\
&\le(\gamma+1)(\alpha_1^2\ell(v_1)+\ldots+\alpha_k^2\ell(v_k))+\gamma k\cdot\max_k\ell(v_k)+\lambda\|r(x)\|_2^2
\end{align*}
and
\begin{align*}
(\alpha_1^2\ell(v_1)+\ldots+\alpha_k^2\ell(v_k))&\le\ell(x)+\gamma\left(|\alpha_1^2-1|\ell(v_1)+\ldots+|\alpha_k^2-1|\ell(v_k)\right)+\lambda\|r(x)\|_2^2\\
&\le\ell(x)+\gamma(\|x\|_2^2+k)\cdot\max_k\ell(v_k)+\lambda\|r(x)\|_2^2.
\end{align*}

Let $\xi\ge\max_k\ell(v_k)$. 
Then we next define the sensitivity score for each $x\in D$ as the following: 
\[\sigma(x) := (\gamma+1)(\alpha_1^2\ell(v_1)+\ldots+\alpha_k^2\ell(v_k))+\gamma k\xi+\lambda\|r(x)\|_2^2,\]
where $v(x)=\alpha_1v_1+\ldots+\alpha_kv_k$. 
Assign the sampling probability by normalizing these scores:
\[p(x):=\frac{\sigma(x)}{T}, \quad \text{where}\quad T:=\sum_{y\in D}\sigma(y).
\]

We now select $s$ independent samples (with replacement) from $D$ according to $p(x)$; define $S=\{x_1,\ldots,x_s\}$ as the resulting multiset. For every sample $x\in S$, define its weight as
\[w(x):=\frac{1}{s\,p(x)}.\]
Hence, the weighted loss estimator is 
\[Z:=\sum_{x\in S}w(x)\,\ell(x).\]
Through the linearity of expectation,
\[\mathbb{E}\big[\ell(x)w(x)\big] = \sum_{x\in D}p(x)\cdot\frac{\ell(x)}{s\,p(x)} = \frac{1}{s}\sum_{x\in D}\ell(x)=\frac{L}{s},\]
so $\mathbb{E}[Z]=L$; that is, the estimator is unbiased.

For a single sample, let 
\[X=\ell(x)w(x)=\frac{\ell(x)}{s\,p(x)}.\]
Then its second moment is
\[\mathbb{E}[X^2]=\sum_{x\in D}p(x)\left(\frac{\ell(x)}{s\,p(x)}\right)^2 =\frac{1}{s^2}\sum_{x\in D}\frac{\ell(x)^2}{p(x)}.\]
Substituting $p(x)=\sigma(x)/T$, we get the following
\[\mathbb{E}[X^2]=\frac{T}{s^2}\sum_{x\in D}\frac{\ell(x)^2}{(\gamma+1)(\alpha_1^2\ell(v_1)+\ldots+\alpha_k^2\ell(v_k))+\gamma k\xi+\lambda\|r(x)\|_2^2}.\]
Since $\ell(x)\le(\gamma+1)(\alpha_1^2\ell(v_1)+\ldots+\alpha_k^2\ell(v_k))+\gamma k\xi)+\lambda\|r(x)\|_2^2$, it follows that
\[\frac{\ell(x)^2}{(\gamma+1)(\alpha_1^2\ell(v_1)+\ldots+\alpha_k^2\ell(v_k))+\gamma k\xi+\lambda\|r(x)\|_2^2} \le \ell(x).\]
Thus,
\[\mathbb{E}[X^2] \le \frac{T}{s^2}\sum_{x\in D}\ell(x)=\frac{L\,T}{s^2}.\]
Summing over all $s$ samples, we have
\[\sum_{i=1}^s\mathbb{E}[X_i^2] \le \frac{L\,T}{s}.\]
Using the bound $T\le L+\gamma(\|D\|_F^2+k|D|)\xi+\lambda R$, where 
\[R:=\sum_{x\in D}\|r(x)\|_2^2,\]
we obtain
\[\sum_{i=1}^s\mathbb{E}[X_i^2] \le \frac{\left(L+\gamma(\|D\|_F^2+k|D|)\xi+\lambda R\right)^2}{s}.\]

For any point $x\in D$, its weighted contribution is
\[\ell(x)w(x)=\frac{\ell(x)}{s}\frac{1}{p(x)}=\frac{\ell(x)}{s}\frac{T}{\sigma(x)}.\]
Since $\ell(x)\le \sigma(x)$, it follows that
\[\ell(x)w(x) \le \frac{T}{s} \le \frac{L+\gamma(\|D\|_F^2+k|D|)\xi+\lambda R}{s}.\]
Thus, if we set 
\[M:=\frac{L+\gamma(\|D\|_F^2+k|D|)\xi+\lambda R}{s},\]
then $|X_i|\le M$ for every sample.

Let 
\[Z=\sum_{i=1}^s X_i=\sum_{x\in S}w(x)\,\ell(x).\]
By Bernstein’s inequality, for any $t>0$,
\[\PPr{|Z-L|\ge t}\le \exp\left(-\frac{t^2}{2\sum_{i=1}^s \Ex{X_i^2} + \frac{2}{3}M\,t}\right).\]
Set $K=\|D\|_F^2+k|D|$ so that $\gamma(\|D\|_F^2+k|D|)\xi=\gamma K\xi$ and set
\[t:=\eps\left(L+\gamma K\xi+\lambda\,\Phi_k(D)\right).\]
Then,
\[\PPr{\left|Z-L\right|\ge\eps\left(L+\gamma K\xi+\lambda\,\Phi_k(D)\right)} \le \exp\left(-\frac{\eps^2\left(L+\gamma K\xi+\lambda\,\Phi_k(D)\right)^2}{2\frac{(L+\gamma K\xi+\lambda\,R)^2}{s}+\frac{2}{3}\frac{L+\gamma K\xi+\lambda\,R}{s}\,\eps\left(L+\gamma K\xi+\lambda\,\Phi_k(D)\right)}\right).\]
By choosing
\[s=\left\lceil \eps^{-2}\left(2+\frac{2\eps}{3}\right) \right\rceil,\]
the exponent can be made sufficiently large so that the probability of failure is below $0.1$. That is, with probability at least $0.9$,
\[\left|\sum_{x\in D}\ell(x)-\sum_{x\in S}w(x)\,\ell(x)\right|\le \eps\left(L+\gamma K\xi+\lambda\,\Phi_k(D)\right).\]
\end{proof}

\thmlossapproxrss*
\begin{proof}
Let $L:=\sum_{x\in D}\ell(x)$ denote the total loss over the dataset $D$. 
We define
\[\Phi_k(D) = \min_{\substack{C \subseteq D \\ |C| = k}} \min_{A \in \mathbb{R}^{k \times m}}\left\lVert D - C A \right\rVert_F^2\]
to be the optimal row subset selection cost using $k$ rows from $D$. 
For each $x\in D$, let $v(x)=\Proj(x,V)$ denote the projection of $x$ onto the subspace spanned by the selected rows $V$. 
Similarly, let $r(x)=x-\Proj(x,V)$ denote the component of $x$ orthogonal to the subspace spanned by $V$, so that $x=v(x)+r(x)$. 
By the Lipschitz assumption on the loss function,
\[|\ell(x)-\ell(v(x))|\le\lambda\cdot\|r(x)\|_2^2.\]
We decompose the component of $x$ within $V$ by $v(x)=\alpha_1v_1+\ldots+\alpha_tv_t$, so that
\[\left\lvert\ell(v(x))-(\alpha_1^2\ell(v_1)+\ldots+\alpha_t^2\ell(v_t))\right\rvert\le\gamma\left(|\alpha_1^2-1|\ell(v_1)+\ldots+|\alpha_t^2-1|\ell(v_t)\right).\]
By triangle inequality,
\begin{align*}
\ell(x)&\le(\alpha_1^2\ell(v_1)+\ldots+\alpha_t^2\ell(v_t))+\gamma\left(|\alpha_1^2-1|\ell(v_1)+\ldots+|\alpha_t^2-1|\ell(v_t)\right)+\lambda\|r(x)\|_2^2\\
&\le(\gamma+1)(\alpha_1^2\ell(v_1)+\ldots+\alpha_t^2\ell(v_t))+\gamma t\cdot\max_t\ell(v_t)+\lambda\|r(x)\|_2^2.
\end{align*}
Now, for a parameter $\xi\ge\max_t\ell(v_t)$, we define the sensitivity score for each $x\in D$ by
\[\sigma(x) := (\gamma+1)(\alpha_1^2\ell(v_1)+\ldots+\alpha_t^2\ell(v_t))+\gamma t\xi+\lambda\|r(x)\|_2^2,\]
and assign the sampling probability $p(x)$ for each $x\in D$ by normalizing the sensitivity scores, so that
\[p(x)=\frac{\sigma(x)}{\sum_{z\in D}\sigma(z)},\]
and thus $\sum_{x\in D}p(x)=1$. 

Consider the process of sampling $s$ independent samples with replacement from $D$ by the probability distribution induced by $p(x)$. 
Let $S=\{y_1,\ldots,y_s\}$ be denote the resulting multi-set and for each sample $y\in S$, define the resulting weight $w(y)=\frac{1}{s\cdot p(y)}$. 
Due to this weighting, the loss estimator is $Z=\sum_{y\in S}w(y)\ell(y)$ and thus by the linearity of expectation, the estimator is unbiased:
\[\Ex{Z}=\sum_{y\in S}\Ex{w(y)\ell(y)}=\sum_{y\in S}\sum_{x\in D}p(x)\cdot\frac{\ell(x)}{s\cdot p(x)}=\sum_{y\in S}\frac{L}{s}=L.\]

We next analyze the variance of the estimator. 
We have
\[\Ex{y^2}=\sum_{x\in D}p(x)\left(\frac{\ell(x)}{s\cdot p(x)}\right)^2 =\frac{1}{s^2}\sum_{x\in D}\frac{\ell(x)^2}{p(x)}.\]
Writing $T=\sum_{z\in D}\sigma(z)$ so that $p(x)=\frac{\sigma(x)}{T}$, then we have
\[\Ex{y^2}=\frac{T}{s^2}\sum_{x\in D}\frac{\ell(x)^2}{(\gamma+1)(\alpha_1^2\ell(v_1)+\ldots+\alpha_t^2\ell(v_t))+\gamma t\xi+\lambda\|r(x)\|_2^2}.\]
By assumption, the loss function satisfies $\ell(x)\le(\gamma+1)(\alpha_1^2\ell(v_1)+\ldots+\alpha_t^2\ell(v_t))+\gamma t\xi)+\lambda\|r(x)\|_2^2$ and thus
\[\frac{\ell(x)^2}{(\gamma+1)(\alpha_1^2\ell(v_1)+\ldots+\alpha_t^2\ell(v_t))+\gamma t\xi+\lambda\|r(x)\|_2^2}\le\ell(x).\]
Hence,
\[\Ex{y^2}\le\frac{T}{s^2}\sum_{x\in D}\ell(x)=\frac{LT}{s^2}.\]
Then over the $s$ samples, it follows that
\[\sum_{i=1}^s\Ex{y_i^2}\le\frac{LT}{s}.\]
Since we have $T\le L+\gamma(\|D\|_F^2+t\cdot |D|)\xi+\lambda R$, then
\[\sum_{i=1}^s\Ex{y_i^2}\le\frac{\left(L+\gamma(\|D\|_F^2+t\cdot |D|)\xi+\lambda R\right)^2}{s}.\]

Finally, before we can apply standard concentration inequalities, we also need to upper bound the maximum of each weighted contribution. 
To that end, observe that for any point $x\in D$, its weighted contribution is
\[\ell(x)w(x)=\frac{\ell(x)}{s}\frac{1}{p(x)}=\frac{\ell(x)}{s}\frac{T}{\sigma(x)}.\]
Because $\ell(x)\le \sigma(x)$, we have
\[\ell(x)w(x)\le\frac{T}{s}\le\frac{L+\gamma(\|D\|_F^2+t\cdot|D|)\xi+\lambda R}{s}.\]
Thus, it follows that $|y_i|\le M$ for every sample, for 
\[M:=\frac{L+\gamma(\|D\|_F^2+t\cdot|D|)\xi+\lambda R}{s}.\]

By applying Bernstein's inequality to the estimator $Z=\sum_{i=1}^s X_i=\sum_{x\in S}w(x)\cdot\ell(x)$, we have that for any $r>0$,
\[\PPr{\left(|Z-L|\ge r\right)}\le\exp\left(-\frac{r^2}{2\sum_{i=1}^s\Ex{y_i^2} + \frac{2}{3}Mr}\right).\]
Let $K=\|D\|_F^2+t\cdot |D|$ so that $\gamma(\|D\|_F^2+t\cdot|D|)\xi=\gamma K\xi$ and set
\[t:=\eps\cdot\left(L+\gamma K\xi+\lambda\,\Phi_k(D)\right).\]
Then,
\[\PPr{\left|Z-L\right|\ge\eps\left(L+\gamma K\xi+\lambda\cdot\Phi_k(D)\right)} \le \exp\left(-\frac{\eps^2\left(L+\gamma K\xi+\lambda\cdot\Phi_k(D)\right)^2}{2\frac{(L+\gamma K\xi+\lambda\cdot R)^2}{s}+\frac{2}{3}\frac{L+\gamma K\xi+\lambda\cdot R}{s}\cdot\eps\left(L+\gamma K\xi+\lambda\cdot\Phi_k(D)\right)}\right).\]
By choosing
\[s=\left\lceil \eps^{-2}\left(2+\frac{2\eps}{3}\right) \right\rceil,\]
the exponent can be made sufficiently large so that the probability of failure is below $0.1$. 
That is, with probability at least $0.9$,
\[\left|\sum_{x\in D}\ell(x)-\sum_{x\in S}w(x)\cdot\ell(x)\right|\le \eps\cdot\left(L+\gamma K\xi+\lambda\cdot\Phi_k(D)\right).\]
Finally, the claim follows by taking a bicriteria algorithm so that $t=\O{k}$ and rescaling $\eps$ as necessary. 
\end{proof}

\section{Empirical Validation of \texorpdfstring{\Cref{assumption}}{Assumption 2.1}}
\label{app:assumption-validation}

We verify \Cref{assumption} directly on the Credit Card logistic regression task by evaluating both sides of each inequality on every sample and plotting the left-hand side against the right-hand side. The theoretical bound corresponds to the diagonal $\text{LHS} = \text{RHS}$: points below the diagonal satisfy the inequality, and the vertical gap to the diagonal measures the slack. \Cref{fig:assumption-validation} shows the result for $\lambda = 0.3$ and $\gamma = 5$. In both panels, all empirical points lie well below the diagonal, confirming that the assumption holds across the dataset with substantial slack at these constants.

\begin{figure}[!htb]
\centering
\includegraphics[width=0.7\linewidth]{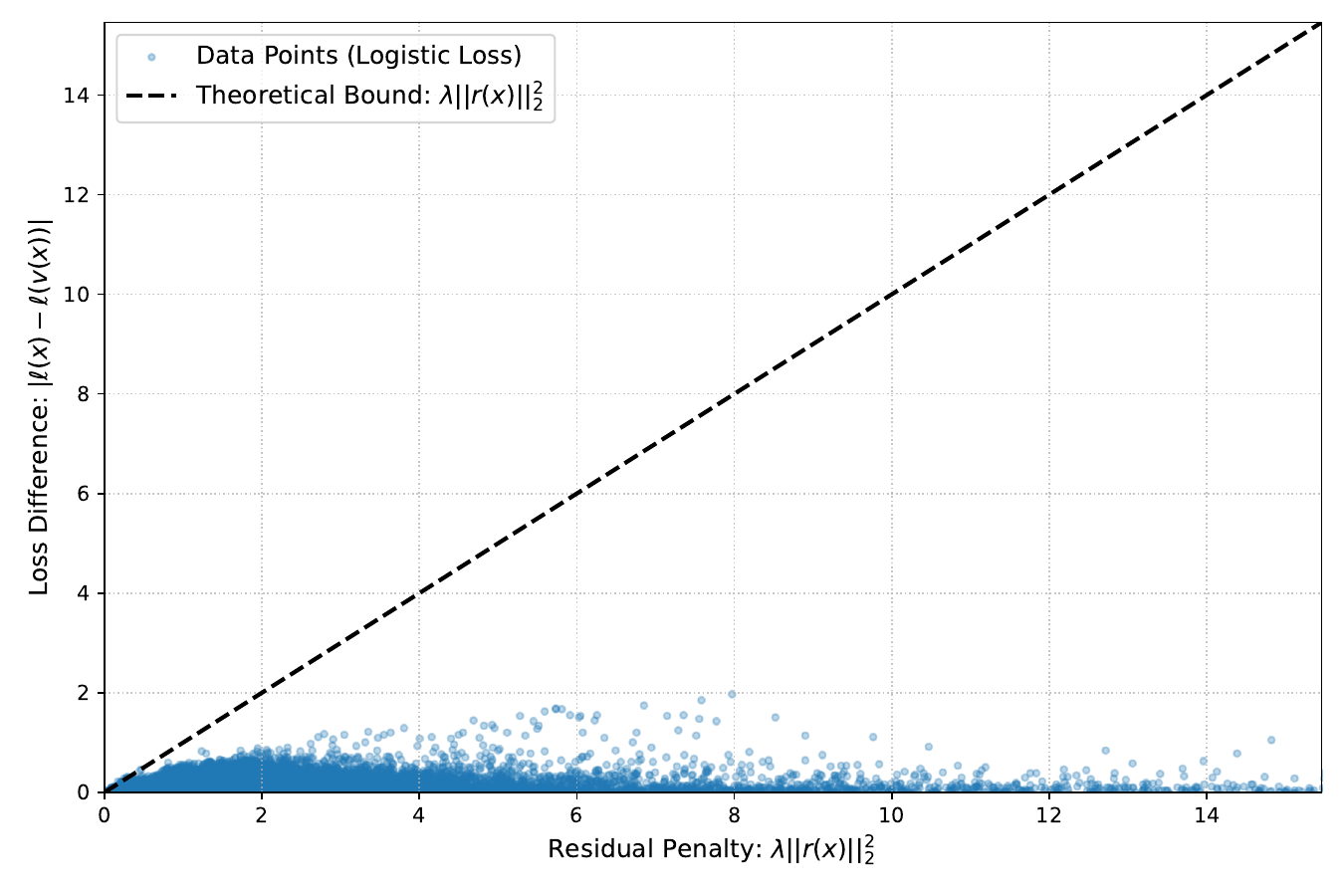}\\[0.5em]
\includegraphics[width=0.7\linewidth]{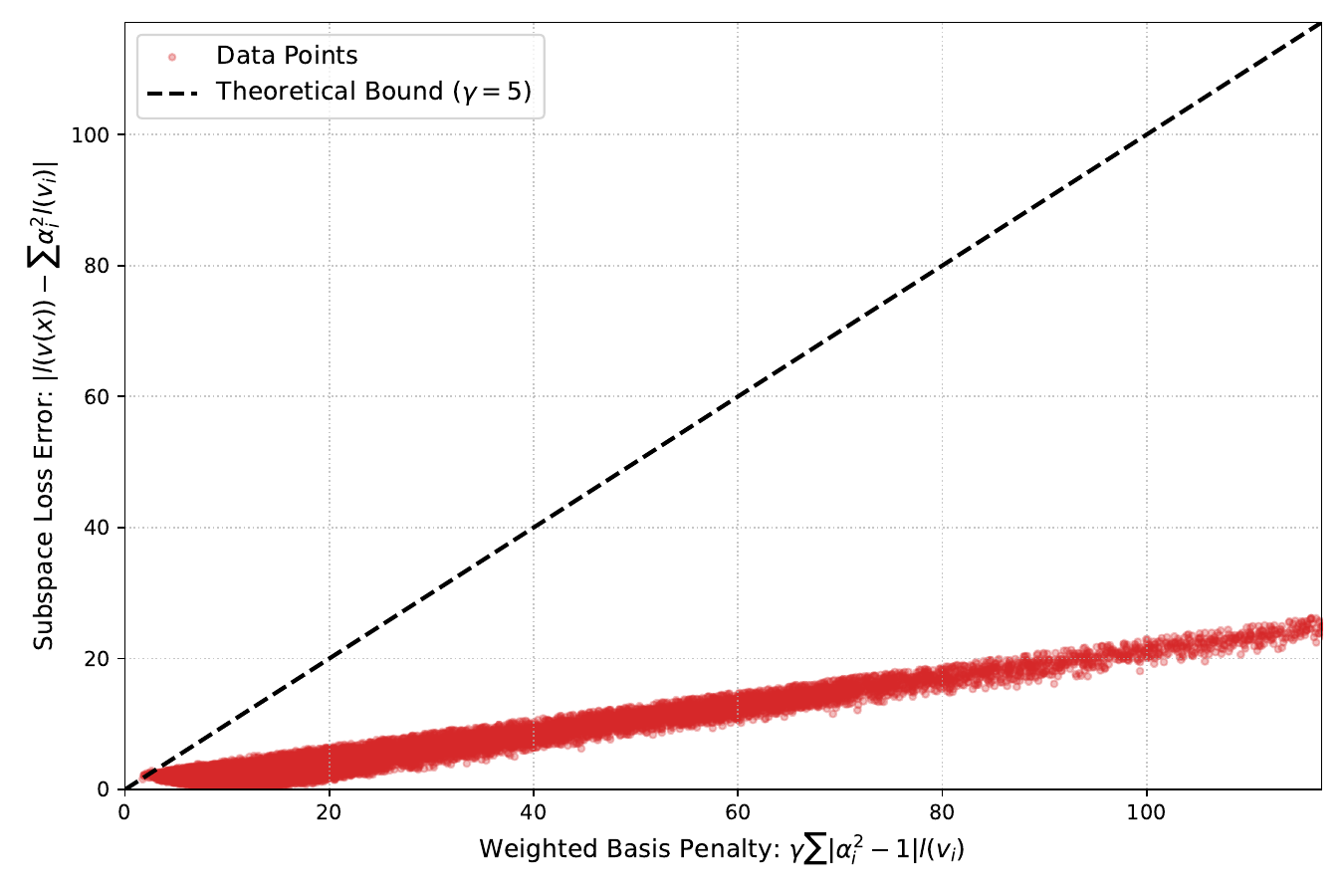}
\caption{Empirical validation of \Cref{assumption} on the Credit Card task. \textbf{Top:} $|\ell(x) - \ell(v(x))|$ vs.\ $\lambda \|r(x)\|_2^2$ with $\lambda = 0.3$. \textbf{Bottom:} $|\ell(v(x)) - \sum_i \alpha_i^2 \ell(v_i)|$ vs.\ $\gamma \sum_i |\alpha_i^2 - 1| \ell(v_i)$ with $\gamma = 5$. Empirical points lie below the theoretical bound (dashed line) in both cases.}
\label{fig:assumption-validation}
\end{figure}

\section{Conclusion}
In this work, we introduced a novel data selection framework based on low-rank approximation, diverging from traditional clustering methods. We proposed a sensitivity sampling algorithm that constructs a small, weighted coreset to approximate the loss of the full dataset. Our main theoretical result, \Cref{thm:loss_approx}, provides a rigorous guarantee for this approach, with an error bound directly tied to the dataset's alignment with a low-rank structure.

Our empirical evaluations confirmed the practical benefits of this method. Across both a standard tabular dataset and challenging Llama3-8B fine-tuning on three tasks, our low-rank approach outperformed uniform sampling and clustering-based techniques in both approximation quality and downstream model performance. Our work provides a scalable, theoretically-grounded, and effective solution for data-efficient training, offering a robust alternative by leveraging the low-rank structure of data to identify the most informative samples.

\end{document}